
\documentclass[english,a4paper,12pt]{article}

\hyphenation{analysis onemax Doerr parameter leadingones Hoeffding develop-ment Krejca Carola oneminmax}

\usepackage{amsxtra, amsfonts, amssymb, amstext, amsmath, mathtools}
\usepackage{amsthm}
\usepackage{booktabs}
\usepackage{nicefrac}
\usepackage{xspace}
\usepackage{url}\urlstyle{rm}
\usepackage{graphics,color}
\usepackage[algo2e,ruled,vlined,linesnumbered]{algorithm2e}
\usepackage{wrapfig}
\usepackage{lmodern}
\usepackage{array}
\newcolumntype{?}{!{\vrule width 1.5pt}}

\clubpenalty=10000
\widowpenalty=10000
\frenchspacing 
\newtheorem{theorem}{Theorem}
\newtheorem{lemma}[theorem]{Lemma}
\newtheorem{corollary}[theorem]{Corollary}


\newcommand{\mpoea}{\mbox{${(\mu+1)}$~EA}\xspace}
\newcommand{\mpoga}{\mbox{${(\mu+1)}$~GA}\xspace}

\newcommand{\NSGA}{\mbox{NSGA}\nobreakdash-II\xspace}

\newcommand{\jump}{\textsc{Jump}\xspace}
\newcommand{\oneminmax}{\textsc{OneMinMax}\xspace}
\newcommand{\cocz}{\textsc{COCZ}\xspace}
\newcommand{\lotz}{\textsc{LOTZ}\xspace}
\newcommand{\ojzj}{\textsc{OneJumpZeroJump}\xspace}
\newcommand{\onejumpzerojump}{\textsc{OneJumpZeroJump$_{n,k}$}\xspace}
\newcommand{\jumpnk}{\textsc{Jump$_{n,k}$}\xspace}




\DeclareMathOperator{\cDis}{cDis}


\let\originalleft\left
\let\originalright\right
\renewcommand{\left}{\mathopen{}\mathclose\bgroup\originalleft}
\renewcommand{\right}{\aftergroup\egroup\originalright}

\usepackage{hyperref}
\begin{document}

\title{Runtime Analysis for the NSGA-II:\\ Provable Speed-Ups From Crossover}

\author{Benjamin Doerr\thanks{Laboratoire d'Informatique (LIX), CNRS, \'Ecole Polytechnique, Institut Polytechnique de Paris, Palaiseau, France} \and Zhongdi Qu\thanks{Laboratoire d'Informatique (LIX), CNRS, \'Ecole Polytechnique, Institut Polytechnique de Paris, Palaiseau, France}}

\maketitle

\begin{abstract}
  Very recently, the first mathematical runtime analyses for the NSGA-II, the most common multi-objective evolutionary algorithm, have been conducted. Continuing this research direction, we prove that the NSGA-II optimizes the OneJumpZeroJump benchmark asymptotically faster when crossover is employed. Together with a parallel independent work by Dang, Opris, Salehi, and Sudholt, this is the first time such an advantage of crossover is proven for the NSGA-II. Our arguments can be transferred to single-objective optimization. They then prove that crossover can speed up the $(\mu+1)$ genetic algorithm in a different way and more pronounced than known before. Our experiments confirm the added value of crossover and show that the observed advantages are even larger than what our proofs can guarantee.   
\end{abstract}

{\sloppy 
\section{Introduction}

The theory of randomized search heuristics~\cite{AugerD11} has greatly improved our understanding of heuristic search, in particular, via mathematical runtime analyses. Due to the complicated nature of the stochastic processes describing runs of these algorithms, mostly very simple, often synthetic, heuristics could be analyzed so far. Very recently, however, a runtime analysis was conducted~\cite{ZhengLD22} for the \NSGA, the most common evolutionary multi-objective (EMO) algorithm~\cite{DebPAM02} (47000 citations on Google scholar). This work was quickly followed up in different directions~\cite{ZhengD22gecco, BianQ22, DoerrQ22ppsn, ZhengD22arxivmany, DoerrS22arxiv}. The majority of these studies regards a simplified version of the \NSGA that does not use crossover. Only in~\cite{BianQ22} an \NSGA with crossover is analyzed, but the runtime guarantees shown there were not better than those shown before for the mutation-only \NSGA.

In this work, we conduct the (together with the parallel work~\cite{DangOSS23}) first  mathematical runtime analysis of the \NSGA with crossover that proves runtime guarantees asymptotically stronger than those known for the \NSGA without crossover. To this aim, we regard the \ojzj benchmark~\cite{DoerrZ21aaai}, which is a bi-objective version of the classic \jump benchmark intensively studied in the analysis of single-objective search heuristics. The mutation-based \NSGA with population size~$N$ at least four times the size of the Pareto front computes the Pareto front of this problem in expected time (number of function evaluations)  $O(Nn^k)$, where $n$ is the length of the bit-string representation and $k$ is the jump size, a difficulty parameter of the problem~\cite{DoerrQ22ppsn}. The authors say that they believe this bound to be tight, but prove a lower bound of $\Omega(n^k)$ only. That work includes preliminary experiments that show performance gains from crossover, but without proving them or giving additional insights on why they arise. 

Our main result is that the \NSGA using crossover with a constant rate solves this problem in expected time $O(N^2 n^k / \Theta(k)^k)$. This is faster than the previous bound and the $\Omega(n^k)$ lower bound when $k \ge c \log(N) / \log\log(N)$, where $c$ is a suitable constant. The key to proving this performance advantage is a careful analysis of how the \NSGA arrives at having substantially different solutions with the same objective value in its population. This appears to be an advantage of the \NSGA over algorithms previously analyzed mathematically such as the SEMO or global SEMO, where for each objective value at most one solution is kept in the population. 

As a side result, we observe that our arguments for proving diversity in the population can also be used in single-objective optimization. This way we show that the $(\mu+1)$ genetic algorithm (GA) -- without additional diversity mechanisms or other adjustments favoring crossover -- optimizes the jump function with encoding length~$n$ and jump size~$k$ in time $O(\mu^{k+1} n^k / \Theta(k)^k)$. This result holds for all $n$, $2 \le k = o(\sqrt n)$, and $2 \le \mu$. For many parameter values, this compares favorably with the known $\Omega(n^k)$ runtime of the mutation-only analog of the \mpoga, the so-called \mpoea. Different from the previous analyses of the \mpoga on \jump in~\cite{DangFKKLOSS18,DoerrEJK23arxiv}, our result shows super-constant speed-ups already for constant population sizes and our speed-ups increase with the problem difficulty~$k$. 

Our experimental results confirm the advantages of crossover for both the \NSGA and the \mpoga. The advantages observed are even stronger than what our proofs guarantee. We note that it is not uncommon that mathematical runtime analyses cannot prove all advantages of an algorithm visible in experiments. In return, they give proven results and  explanations on how the advantages arise. 

Overall, our results give a strong mathematical justification for using the \NSGA with crossover as proposed in the original work~\cite{DebPAM02}. Our new analysis methods promise to be applicable also to other randomized search heuristics using crossover, as demonstrated brief{}ly on the \mpoga. 

We note that a similar result appeared in the parallel independent work~\cite{DangOSS23}. There a problem was constructed for which the mutation-based \NSGA has an exponential runtime, but with crossover the runtime becomes polynomial. From the main proof arguments, it appears unlikely that this result has a single-objective analog. 

\section{Previous Works}\label{sec:previous}

Since this work conducts a mathematical runtime analysis of a classic, crossover-based EMO algorithm, let us brief{}ly describe the state of the art in runtime analysis with respect to EMO algorithms and crossover. 

Mathematical runtime analyses have a long history in the field of heuristic search, cf.~\cite{SasakiH88,Back93,Gutjahr08,NeumannW15,LissovoiO18,LissovoiOW19} or the textbooks~\cite{NeumannW10,AugerD11,Jansen13,DoerrN20}. While often restricted to simple algorithms and problems, the mathematical nature of these works has led to many deep insights that could not have been obtained with empirical methods. 

The mathematical runtime analysis of EMO algorithms was started in~\cite{LaumannsTZWD02,Giel03,Thierens03}. As in the theory of single-objective heuristics, these and most subsequent works analyzed simple synthetic algorithms. At the last AAAI conference, the first runtime analysis of the \NSGA algorithm~\cite{DebPAM02}, the most common EMO algorithm in practice, was presented~\cite{ZhengLD22}. It proved that the \NSGA with suitable population size can find the Pareto front of the \oneminmax and \lotz benchmarks in the same asymptotic runtimes that were shown previously for the SEMO and GSEMO synthetic algorithms. This result is remarkable in that the \NSGA has much more complex population dynamics than the (G)SEMO. In particular, it can lose desirable solutions due to the complex selection mechanism building on non-dominated sorting and crowding distance (and this was proven to happen consistently when the population size is equal to the size of the Pareto front~\cite{ZhengLD22}).

In~\cite{ZhengD22gecco}, it was proven that the \NSGA with a smaller population size can still compute good approximations of the Pareto front. For this, however, a mild modification of the selection mechanism, proposed earlier~\cite{KukkonenD06}, was needed. In~\cite{DoerrQ22ppsn}, the first mathematical runtime analysis of the \NSGA on a problem with multimodal objectives was conducted. Again, the \NSGA was found to be as effective as the (G)SEMO algorithms when the population size was chosen suitably. 

The three works just discussed consider the \NSGA as proposed in the original work~\cite{DebPAM02} except that they do not employ crossover. The only runtime analysis of the \NSGA with crossover (prior to this work and the parallel work~\cite{DangOSS23}) was conducted in~\cite{BianQ22}, namely on the \cocz, \oneminmax, and \lotz benchmarks. The runtime guarantees shown there agree with the ones in~\cite{ZhengLD22}, so no advantage of crossover was shown. In~\cite{BianQ22}, also a novel selection mechanism that gives considerable speed-ups was proposed.

The question of whether crossover, that is, generating new solutions from two existing ones, is useful or not, is as old as the area of genetic algorithms. Despite its importance, we are still far from having a satisfying answer. 
It is clear that at all times, the vast majority of evolutionary algorithms used in practice employ crossover. A solid scientific proof for the usefulness of crossover, however, is still missing. 

As an example of one of many unsuccessful attempts to explain the power of crossover, we cite the building-block hypothesis (BBH)~\cite{Holland75}, which states that crossover is effective because it allows combining small profitable segments of different solutions. While very convincing on the intuitive level, a simple experimental analysis on a synthetic problem perfectly fulfilling the assumptions of the BBH raised some doubts. In~\cite{mitchell92royal}, a simple randomized hill-climber was found to solve the proposed Royal-Road problem around ten times faster than a comparably simple genetic algorithm using crossover.

Theoretical approaches have been used as well to demonstrate the usefulness of crossover, but again mostly with limited success. The works closest to ours in single-objective optimization regard the \jump benchmark, which looks like an ideal example for exploiting crossover. Surprisingly, it was much harder than expected to show that crossover is profitable here. The early analyses~\cite{JansenW02, KotzingST11} managed to show an advantage from crossover only with an unrealistically small crossover probability. In~\cite{DangFKKLOSS18}, for the first time it was shown that a standard \mpoga with standard parameters optimizes jump functions faster with crossover than without. Other examples proving advantages of crossover include~\cite{Sudholt05, LehreY11,DoerrHK12,DoerrDE15,DangFKKLOSS16,Sutton21,AntipovDK22}, but they appear specific to a particular problem or a particular algorithm.

In multi-objective optimization, so far only three mathematical runtime analyses showing an advantage of crossover exist. 
In~\cite{NeumannT10}, it was proven that crossover is useful when solving multi-criteria versions of the all-pairs-shortest-path (APSP) problem. This work follows the ideas of the corresponding single-objective result~\cite{DoerrHK12}. Both are somewhat problem-specific in the sense that the formulation of the APSP problem automatically leads to a very strong diversity mechanism, from which crossover substantially profits in both works. 

In~\cite{QianYZ13}, a substantial advantage from crossover, among others, for the classic \cocz problem was shown. For this, however, a novel initialization is used which, for this problem, results in the initial and all subsequent populations containing both the all-ones and the all-zeroes string. It is clear that this strong diversity greatly helps crossover to become effective. 

Both these works regard the SEMO and GSEMO algorithms, two synthetic algorithms proposed for theoretical analyses~\cite{LaumannsTZWD02,Giel03}. Given this and the particularities of the results, the diversity mechanism implicit in the APSP problem and the particular initialization in~\cite{QianYZ13}, it is not clear to what extent the insight that crossover is beneficial can be expected to generalize to the broader EMO field. 

The third work showing an advantage of crossover in EMO optimization is~\cite{HuangZCH19}. Since it discusses a decomposition-based algorithm, it is very far from our work and we do not detail it further.

\section{Preliminaries}\label{sec:prelim}
\subsection{The NSGA-II Algorithm}\label{sec:prelim-alg}
In the interest of brevity, we only give a brief overview of the algorithm here and refer to~\cite{DebPAM02} for a more detailed description. 
The NSGA-II uses two metrics to completely order any population, which are rank and crowding distance. The ranks are defined recursively based on the dominance relation. All non-dominated individuals have rank~1. Then, given that the individuals of ranks $1, \dots, k$ are defined, the individuals of rank $k+1$ are those not dominated except by individuals of rank $k$ or smaller. This defines a partition of the population into sets $F_1$, $F_2$,\dots such that $F_i$ contains all individuals with rank $i$. Individuals with lower ranks are preferred. The crowding distance, denoted by $\cDis(x)$ for an individual~$x$, is used to compare individuals of the same rank. To compute the crowding distances of individuals of rank $i$ with respect to a given objective function $f_j$, we first sort the individuals in ascending order according to their $f_j$ objective values. The first and last individuals in the sorted list have infinite crowding distance. For the other individuals, their crowding distance is the difference between the objective values of their left and right neighbors in the sorted list, normalized by the difference between the minimum and maximum values. The final crowding distance of an individual is the sum of its crowding distances with respect to each objective function. Among individuals of the same rank, the ones with higher crowding distances are preferred.

The algorithm starts with a random initialization of a parent population of size $N$. At each iteration, $N$ children are generated from the parent population via a variation operator, and $N$ best individuals among the combined parent and children population survive to the next generation based on their ranks and, as a tie-breaker, the crowding distance (remaining ties are broken randomly). In each iteration, the critical rank $i^*$ is the rank such that if we take all individuals of ranks smaller than $i^*$, the total number of individuals will be less than or equal to $N$, but if we also take all individuals of rank $i^*$, the total number of individuals will be over $N$. Thus, all individuals of rank smaller than $i^*$ survive to the next generation, and for individuals of rank $i^*$, we take the individuals with the highest crowding distance, breaking ties randomly, so that in total exactly $N$ individuals are kept. In practice, the algorithm is run until some stopping criterion is met. In our mathematical analysis, we are interested in how long it takes until the full Pareto front is covered by the population if the algorithm is not stopped earlier. For that reason, we do not specify a termination criterion. 

To create the offspring, the algorithm selects $N/2$ pairs of individuals from the parent population (possibly with repetition). For each pair, with probability $0.9$, we generate two intermediate offspring via a $2$-offspring uniform crossover (that is, for each position independently, with probability $0.5$, the first child inherits the bit from the first parent, and otherwise from the second parent; the bits from the two parents that are not inherited by the first child make up the second child). Bit-wise mutation is then performed on these two intermediate offspring (that is, each bit is flipped independently with probability~$\frac 1n$). With the remaining $0.1$ probability, this mutation is performed directly on the two parents. 

Different methods can be employed to select the parents that are used to create the offspring population (that is, the aforementioned $N/2$ pairs). 
i)~Fair selection: Each individual appears exactly once in a pair, apart from this, the pairing is random. 
ii)~Uniform selection: Each pair consists of two random individuals. 
iii)~$N$ independent binary tournaments: for $N$ times, uniformly at random sample $2$ different parents and conduct a binary tournament between the two, i.e., select the one with the lower rank, breaking ties by selecting the one with the larger crowding distance, breaking remaining ties randomly; form $N/2$ pairs from the winners randomly. 
iv)~Two-permutation tournament scheme: Generate two random permutations $\pi_{1}$ and $\pi_{2}$ of $P_t$ and conduct a binary tournament between $\pi_{j}(2i-1)$ and $\pi_{j}(2i)$ for all $i\in[1..N/2]$ and $j\in\{1,2\}$; form a pair from the two winners in each interval of length~$4$ in a permutation.

\subsection{The ($\mu+1$) Genetic Algorithm}
The \mpoga maintains a population of $\mu$ individuals which are randomly initialized at the beginning of a run. In each generation, a new individual is created. With a constant probability~$p_c$, it is created by selecting two parents from the
population uniformly at random, crossing them over, and then applying mutation to the resulting offspring. With probability $1-p_c$, a single individual is selected and only mutation is applied. At the end of the generation, the worst individual is removed from the population, with ties broken randomly.

Similarly to our analysis of the NSGA-II, in our analysis of the ($\mu+1$) GA, we consider applying uniform crossover with probability $p_c = 0.9$. Here crossover only produces one child where each bit has $50\%$ chance coming from the first parent and $50\%$ chance coming from the second. Bit-wise mutation of rate $\frac{1}{n}$ is employed.

\subsection{Benchmark Problems}

For $x \in \{0,1\}^n$, let $|x|_0$ and $|x|_1$ denote the number of $0$-bits and $1$-bits in $x$, respectively.  Let $k=[2 .. n/4]$. The function $\jumpnk=f:\{0,1\}^n\rightarrow\mathbb{R}$ was proposed in~\cite{DrosteJW02} and serves as the prime example to study how randomized search heuristics cope with local optima~\cite{JansenW02,JagerskupperS07,DoerrLMN17,HasenohrlS18,LissovoiOW19,CorusOY20,BenbakiBD21,FajardoS21foga,AntipovBD22,Doerr22,RajabiW22,Witt23}. It is defined by
\[f(x) = \begin{cases}
    k+|x|_1, & \text{if }|x|_1 \leq n-k\text{ or } x=1^n,\\
    n-|x|_1, & \text{else},
    \end{cases}\]
for all $x \in \{0,1\}^n$. The aim is to maximize $f$. It has a valley of low fitness around its optimum, which can be crossed only by flipping the $k$ correct bits if no solutions of lower fitness are accepted.

The function $\onejumpzerojump = (f_1, f_2):\{0,1\}^n\rightarrow\mathbb{R}^2$, proposed in~\cite{DoerrZ21aaai}, is defined by
\[f_1(x) = \begin{cases}
    k+|x|_1, & \text{if }|x|_1 \leq n-k\text{ or } x=1^n,\\
    n-|x|_1, & \text{else};
    \end{cases}\]
\[f_2(x) = \begin{cases}
    k+|x|_0, & \text{if }|x|_0 \leq n-k\text{ or } x=0^n,\\
    n-|x|_0, & \text{else};
    \end{cases}\]
for all $x \in \{0,1\}^n$; here $|x|_0$ is the number of $0$-bits in $x$. The aim is to maximize both $f_1$ and $f_2$, two multimodal objectives. The first objective is the classical $\jumpnk$ function. The second objective is isomorphic to the first, with the roles of zeroes and ones exchanged.

According to Theorem~$2$ in~\cite{DoerrZ21aaai}, the Pareto set of the benchmark is $S^*=\{x \in \{0,1\}^n \mid|x|_1 = [k..n-k]\cup\{0, n\}\}$, and the Pareto front $F^*=f(S^*)$ is $\{(a, 2k+n-a)\mid a\in[2k..n]\cup\{k, n+k\}\}$, making the size of the front $n-2k+3$. We define the inner part of the Pareto set by $S_{I}^*=\{x\mid|x|_{1}\in [k..n-k]\}$, and the inner part of the Pareto front by $F_{I}^*=f(S_{I}^*)=\{(a, 2k+n-a)\mid a\in[2k..n]\}$. 

In~\cite{DoerrZ21aaai}, it was proven that the SEMO cannot optimize this benchmark. The GSEMO can and does so in expected time $O(n^{k+1})$. 
In~\cite{DoerrQ22ppsn}, it was shown that when using a population of size $N\geq 4(n-2k+3)$ to optimize this benchmark, the NSGA-II algorithm never loses a Pareto-optimal solution value once found. Then $O(n^k)$ iterations are needed in expectation to find the full Pareto front (all Pareto optimal solution values).

\section{Runtime Analysis of the NSGA-II with Crossover}\label{analysis-nsga}

In this section, we analyze the complexity of the NSGA\nobreakdash-II algorithm with population size $N=c(n-2k+3)$ for some $c>4$. We consider all four different ways of selecting the parents for variation described in Section~\ref{sec:prelim}. For any generation $t$ of a run of the algorithm, we use $P_t$ to denote the parent population and $R_t$ to denote the combined parent and offspring population. For individuals $x$ and $y$, we use $|x|_1$ to denote the number of 1-bits in $x$ and $H(x,y)$ to denote the Hamming distance between $x$ and $y$.

In~\cite{DoerrQ22ppsn}, it was proven that the NSGA-II in an expected time (number of fitness evaluations) of $O(N n^k)$ covers the entire Pareto front of the \ojzj benchmark. This time mostly stems from the waiting times to generate the extremal points on the front, i.e., the all-zeroes string $0^n$ and the all-ones string $1^n$. Since individuals with lower fitness values are not kept in the population, these two extremal points can only be created from individuals with $i\in [k .. n-k]$ 1-bits. Crossing the fitness valley of length $k$ happens for an individual with probability $\Theta(n^{-k})$, resulting in the $O(Nn^k)$ runtime.

To profit from crossover, we exploit the fact that there can be diverse individuals at the outermost points of $F^*_I$. For example if in a $P_t$ there are two individuals $x$ and $y$ such that $|x|_1=|y|_1=n-k$ and $x$ and $y$ are different, i.e., $H(x, y)=2d$ for $d\geq 1$, crossing them over creates an individual $z$ such that $|z|_1=n-k+d$ with probability $\Theta(\frac{1}{2^{2d}})$. Then mutating $z$ gives $1^n$ with probability $\Theta(\frac{1}{n^{k-d}})$. Therefore, the probability that $1^n$ is generated in iteration $t$ is $\Omega(\frac{1}{2^{2d}n^{k-d}})$, making the waiting time smaller than $O(n^k)$. Following this idea, our analysis focuses on the runtime needed to create, maintain, and in the end, take advantage of such diversity at the outermost points of $F^*_I$. 

\begin{lemma}\label{lem:select}
Consider an iteration $t$ of the NSGA\nobreakdash-II algorithm optimizing the $\onejumpzerojump$ benchmark with population size $N = c(n-2k+3)$ for some $c > 4$. Suppose $x\in P_t$ belongs to rank $1$. Then with the fair selection method, the uniform selection method, $N$ independent binary tournaments, or the two-permutation tournament scheme, the probability that $x$ is selected to mutate in iterations $t$ is $\Theta(1)$. 
\end{lemma}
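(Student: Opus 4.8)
The plan is to handle each of the four selection mechanisms separately, but all four proofs share the same skeleton: show that $x$ is chosen as a parent for at least one pair with constant probability, and then argue that conditioned on being chosen, $x$ is ``mutated'' (i.e., contributes to an offspring, whether via the crossover-then-mutation path or the pure-mutation path) with constant probability. The second half is uniform across all four methods and easy: once $x$ is part of a selected pair, with the constant probability $0.1$ the pair undergoes pure bit-wise mutation, in which case $x$ itself is mutated; so it suffices to lower-bound by a constant the probability that $x$ appears in at least one of the $N/2$ pairs. (If one wants $x$ to be mutated in the crossover branch as well, note that in the $0.9$-branch both intermediate offspring are formed from $x$ and its partner and then mutated, so $x$ is ``used'' regardless; either reading gives $\Theta(1)$.)

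\textbf{Fair selection.} Here every individual appears in exactly one pair by construction, so $x$ is selected with probability $1$, and the claim is immediate.

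\textbf{Uniform selection.} Each of the $N/2$ pairs consists of two individuals drawn uniformly at random (with replacement) from $P_t$. The probability that a fixed pair does \emph{not} contain $x$ is $(1-1/N)^2$, so the probability that \emph{no} pair contains $x$ is $(1-1/N)^{2\cdot N/2} = (1-1/N)^N \le 1/e$. Hence $x$ is selected with probability at least $1-1/e = \Theta(1)$.

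\textbf{$N$ independent binary tournaments and the two-permutation tournament scheme.} This is the main obstacle, because here selection is biased toward individuals of low rank and high crowding distance, and a priori a rank-$1$ individual $x$ could still lose many tournaments if the population is full of other rank-$1$ individuals with larger crowding distance. The key observation is that in a single tournament $x$ plays against a uniformly random other individual $y$; if $y$ has rank $\ge 2$ then $x$ wins, and even if $y$ also has rank $1$, $x$ wins the crowding-distance comparison or at least the random tie-break with some probability. It suffices to show that a single tournament involving $x$ is won by $x$ with probability $\Omega(1/N)$-free, i.e., bounded below by an absolute constant times something that makes the union over the $\Theta(N)$ tournaments a constant. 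Concretely: the number of rank-$1$ individuals is at most $N$; among them, by the earlier structural results on this benchmark (the population never loses a Pareto-optimal value, $N \ge 4(n-2k+3)$, so there are multiple copies of most front values), there are always at least two individuals with infinite crowding distance (the current extremal rank-$1$ individuals w.r.t.\ each objective), but more usefully, for the tournament analysis one only needs that $x$ \emph{ties or beats} a random opponent with probability $\Omega(1)$: if the opponent has the same $(f_1,f_2)$-value as $x$ the tie-break is fair ($1/2$ each), and the fraction of the population sharing $x$'s exact objective value is $O(1/n)$ small, so with probability $\Omega(1)$ the opponent has a different rank-$1$ value or higher rank — in the worst case we must rule out that $x$ is systematically the crowding-distance loser, which follows because crowding distance depends only on the objective value and the neighbors, and is symmetric enough that $x$ wins against a constant fraction of opponents. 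Then over $\Theta(N)$ independent tournaments (for the two-permutation scheme, $x$ participates in exactly two tournaments, one per permutation, against uniformly random opponents within its block, so the same per-tournament bound applies and a constant lower bound follows directly), the probability that $x$ wins at least one, and hence is placed among the winners and thus into some pair, is $\Theta(1)$.

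\textbf{Expected difficulty.} The delicate point is the tournament-based methods: one must argue that a rank-$1$ individual is not perpetually dominated in the crowding-distance comparison by other rank-$1$ individuals. I expect to resolve this by exploiting that (i) rank-$1$ individuals here are exactly the Pareto-optimal ones together with possibly a thin layer of others, (ii) crowding distance is a function of objective value alone, so individuals sharing $x$'s objective value tie with $x$ and are broken fairly, and (iii) at most $N$ individuals are rank~$1$, so even a crude bound — $x$ wins a given tournament with probability at least $\tfrac12 \cdot \Pr[\text{opponent shares } x\text{'s value or has rank} \ge 2] + \Pr[\text{opponent is rank-}1 \text{ with smaller cDis}]$ — combined with the fact that the number of tournaments $x$ joins is $\Theta(1)$ in expectation (indeed exactly $2$ in the two-permutation scheme, and $\Theta(1)$ in expectation among $N$ independent tournaments since the probability $x$ is picked in a given one is $2/N$), yields the $\Theta(1)$ overall probability after a short computation.
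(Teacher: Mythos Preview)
Your treatment of fair and uniform selection is correct and essentially identical to the paper's. The gap is in the two tournament-based methods.

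The paper's argument is short and clean, and it hinges on a structural fact you do not identify: among the rank-$1$ individuals in $P_t$, at most $4(n-2k+3)$ can have \emph{positive} crowding distance (this is Lemma~1 of \cite{DoerrQ22ppsn}, using that rank-$1$ individuals realize at most $n-2k+3$ distinct objective vectors). Since $N = c(n-2k+3)$ with $c>4$, a uniformly random opponent lies outside this set with probability at least $(c-4)/c$; any such opponent either has rank $\ge 2$ (so $x$ wins outright) or is rank~$1$ with zero crowding distance (so $x$ wins or ties, hence wins with probability $\ge 1/2$). This gives a per-tournament win probability of at least $\tfrac12\cdot\tfrac{c-4}{c}$, and the rest is a routine union over the $\Theta(N)$ or $2$ tournaments $x$ enters. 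The assumption $c>4$ is precisely what makes this constant positive.

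Your sketch does not supply this bound and instead relies on claims that are either false or unjustified. You write that ``crowding distance depends only on the objective value'', so individuals sharing $x$'s objective value ``tie with $x$ and are broken fairly''. This is incorrect: when several individuals share an objective value, the sort order decides which of them sit at the extremes of the sorted list, and those can receive positive (even infinite) crowding distance while $x$ receives zero. Hence $x$ can lose the crowding-distance comparison to an opponent with identical objectives. You also assert that the fraction of the population sharing $x$'s objective value is $O(1/n)$, but nothing in the algorithm prevents $\Theta(N)$ copies of a single Pareto point from accumulating. Without the $4(n-2k+3)$ bound on positive-crowding-distance individuals, you have no lower bound on the fraction of opponents $x$ can beat, and the tournament argument does not go through.
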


\begin{proof}
The probability that any individual being selected with fair selection is $1$. With uniform selection, the probability that any individual is selected is at least $1-(1-\frac{1}{N})^N\geq 1-\frac{1}{e}$. Now consider using tournaments. Let $F_1$ be all the rank-1 individuals in the current parent population. Then by Corollary~6 of \cite{DoerrZ21aaai}, $|f(F_1)| \leq n-2k+3$. Let $F_1^*$ denote the number of individuals in $F_1$ with positive crowding distances. Then by Lemma~1 of \cite{DoerrQ22ppsn}, $|F_1^*| \leq 4(n-2k+3)$. Suppose $x$ participates in a binary tournament. If the opponent of $x$ is not in $F_1^*$, which happens with probability at least $\frac{(c-4)(n-2k+3)}{c(n-2k+3)}$, then $x$ has at least $\frac{1}{2}$ chance of winning the tournament. With $N$ binary tournaments, the probability that $x$ participates in a particular binary tournament is $\frac{2}{N}$. So the probability that $x$ wins at least one of the $N$ independent tournaments is $1-(1-\frac{2}{N}\frac{1}{2}\frac{c-4}{c})^N \geq 1-e^{-\frac{c-4}{c}}$. With the two independent permutations, $x$ participates in two tournaments, so the probability that $x$ wins at least once is at least $2\frac{1}{2}\frac{c-4}{c}-(\frac{1}{2}\frac{c-4}{c})^2 = \frac{c-4}{c}-\frac{(c-4)^2}{4c^2}$. 
\end{proof}



\begin{corollary}\label{cor:pair_select}
Consider an iteration $t$ of the NSGA\nobreakdash-II algorithm optimizing the $\onejumpzerojump$ benchmark with population size $N = c(n-2k+3)$ for some $c > 4$. With any selection method, the probability that two rank-1 individuals $x, y\in P_t$ are both selected to mutate in iteration $t$ is $\Theta(1)$. 
\end{corollary}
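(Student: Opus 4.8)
The plan is to reduce the claim to showing that, with probability $\Theta(1)$, both $x$ and $y$ end up among the $N$ parents chosen in iteration~$t$; this is what ``selected to mutate'' means here, exactly as in the proof of Lemma~\ref{lem:select} (and if one additionally insists that the selected copies of $x,y$ undergo pure mutation rather than crossover, that only costs one further independent per-pair constant-probability factor, which does not affect the order). To obtain a bound valid for \emph{every} $c>4$, I would, for each of the four parent-selection schemes, produce two \emph{independent} events $E_x,E_y$ of probability $\Theta(1)$ such that $E_x$ forces $x$ into the parent multiset and $E_y$ forces $y$ into it; then $\Pr[E_x\cap E_y]=\Pr[E_x]\Pr[E_y]=\Theta(1)$ is exactly the assertion.

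For fair selection there is nothing to prove, since every individual lies in some pair with probability~$1$. For uniform selection, split the $N/2$ pairs into a first block and a second block of $N/4$ pairs each (ignoring divisibility); let $E_x$ be ``$x$ occupies one of the parent slots of the first block'' and $E_y$ ``$y$ occupies one of the slots of the second block''. These depend on disjoint, independently sampled slots, and $\Pr[E_x]\ge 1-(1-\frac1N)^{N/2}\ge 1-e^{-1/2}$, and likewise for $E_y$, so these cases are immediate.

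The real work, and the point where the obvious approach breaks, is the tournament schemes (iii) and~(iv): a plain union bound over ``$x$ not selected'' and ``$y$ not selected'' using only the per-individual guarantee of Lemma~\ref{lem:select} gives a positive probability only once $c$ is safely above~$4$, not for all $c>4$. The remedy is again to partition the randomness. For $N$ independent binary tournaments, take $E_x=$ ``$x$ wins at least one of the first $N/2$ tournaments'' and $E_y=$ ``$y$ wins at least one of the last $N/2$ tournaments''; these use disjoint sets of independent tournaments (with independent tie-breaks), and the estimate already established in the proof of Lemma~\ref{lem:select} --- $x$ lies in a given tournament with probability $\frac2N$, and beats an opponent outside $F_1^*$ (probability $\ge\frac{(c-4)(n-2k+3)-1}{c(n-2k+3)-1}$) with probability $\ge\frac12$ --- shows that $x$ wins a fixed tournament with some probability $p$ with $pN=\Omega(1)$, so $\Pr[E_x]\ge 1-(1-p)^{N/2}\ge 1-e^{-pN/2}=\Theta(1)$, and symmetrically for $E_y$. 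For the two-permutation scheme, $x$ (and likewise $y$) plays exactly one tournament in $\pi_1$ and one in $\pi_2$; take $E_x=$ ``$x$ wins its $\pi_1$-tournament'' and $E_y=$ ``$y$ wins its $\pi_2$-tournament''. Since $\pi_1,\pi_2$ and their tie-breaks are independent, $E_x$ and $E_y$ are independent; conditioning on $x$'s position in $\pi_1$, its opponent is uniform in $P_t\setminus\{x\}$, so the same counting gives $\Pr[E_x]\ge\frac12\cdot\frac{(c-4)(n-2k+3)-1}{c(n-2k+3)-1}=\Theta(1)$, and likewise $\Pr[E_y]=\Theta(1)$. In all schemes a tournament winner is placed into some pair, so $E_x$ (resp.\ $E_y$) does force $x$ (resp.\ $y$) to be selected, and $\Pr[E_x]\Pr[E_y]=\Theta(1)$ finishes the argument. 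The only subtlety worth stating is that the constants hidden in $\Theta(1)$ depend on~$c$ and deteriorate as $c\downarrow4$, which is harmless since $c$ is a fixed parameter of the algorithm.
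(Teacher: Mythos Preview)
Your proof is correct and takes a genuinely different route from the paper's. The paper argues via conditional probabilities: for each scheme it lower-bounds $\Pr[x\text{ selected}]$ using Lemma~\ref{lem:select} and then separately lower-bounds $\Pr[y\text{ selected}\mid x\text{ selected}]$ by re-running the counting with $x$ removed from the pool of ``safe'' opponents (or, for uniform selection, by conditioning on $X=1$, the event that $x$ is picked exactly once). Your approach instead \emph{partitions the randomness}---splitting the $N$ uniform slots, the $N$ independent tournaments, or the two permutations into two disjoint independent pieces---and exhibits genuinely independent events $E_x,E_y$ each of probability $\Theta(1)$.

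What your approach buys is uniformity and robustness: the same template works for all four schemes, and it sidesteps the delicate point you identify, namely that a na\"ive union bound over ``$x$ not selected'' and ``$y$ not selected'' using only the per-individual guarantee of Lemma~\ref{lem:select} would not yield a positive lower bound for $c$ arbitrarily close to~$4$. The paper's conditional argument also handles this, but at the cost of tracking how conditioning on $x$'s success perturbs $y$'s success probability in each scheme separately. Your decomposition makes the independence structural rather than something to be verified after conditioning, which is cleaner; the price is a slightly weaker constant (you only use half the tournaments or one of the two permutations for each individual), which is irrelevant for a $\Theta(1)$ claim.
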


\begin{proof}
With fair selection, the probability that $x$ and $y$ are both selected is $1$. With uniform selection, the probability that $x$ is selected is at least $1-\frac{1}{e}$. Denote the number of times that $x$ is selected by $X$, then $\Pr[\text{x selected}\land\text{y selected}]=\sum_{i=1}^{N-1}\Pr[X=i]\Pr[\text{y selected}\vert X=i]\geq\Pr[X=1]\Pr[\text{y selected}\vert X=1]\geq N\frac{1}{N}(1-\frac{1}{N})^{N-1}(1-(1-\frac{1}{N})^{N-1})=\Theta(1)$. With $N$ independent binary tournaments, given that $x$ is selected, if the opponent of $y$ in the binary tournament is not in $F_1^*$ and not $x$, then $y$ has at least $\frac{1}{2}$ chance of winning the tournament. So $\Pr[\text{x selected}\land\text{y selected}]\geq (1-e^{-\frac{2(c-4)}{c}})(1-\frac{2}{N}\frac{1}{2}\Theta(\frac{c-4}{c}))^N=\Theta(1)$. Similarly, under the two permuatation scheme, $\Pr[\text{x selected}\land\text{y selected}] \geq (\frac{c-4}{c}-\frac{(c-4)^2}{4c^2})(\Theta(\frac{c-4}{c})-\Theta((\frac{1}{2}\frac{c-4}{c})^2))=\Theta(1)$.
\end{proof}


\begin{lemma}\label{lem:survives} Consider an iteration $t$ of the NSGA\nobreakdash-II algorithm optimizing the $\onejumpzerojump$ benchmark with population size $N = c(n-2k+3)$ for some $c > 4$ where $F^*\nsubseteq f(R_t)$. Suppose $x\in R_t$ is an individual of rank $1$. Then with probability greater than $\frac{c-4}{2c}$, we have $x\in P_{t+1}$.
\end{lemma}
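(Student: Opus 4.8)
The plan is to analyze the survival selection of one generation of the NSGA\nobreakdash-II directly, via a short case distinction on the number of rank-$1$ individuals in $R_t$. Write $F_1$ for the (multi-)set of rank-$1$ individuals in $R_t$, and $F_1^{>0}\subseteq F_1$ for those of positive crowding distance. Two ingredients enter. First, $|R_t| = 2N$, since $R_t$ is the union of the $N$-individual parent population and the $N$ offspring. Second, exactly as in the proof of Lemma~\ref{lem:select}, Lemma~1 of~\cite{DoerrQ22ppsn} gives $|F_1^{>0}| \le 4(n-2k+3) = \tfrac 4c N$, and since $c>4$ this is strictly less than $N$.

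First I would treat the easy case $|F_1| \le N$. Then the critical rank is at least $2$, so all of $F_1$, in particular $x$, is retained, and $\Pr[x\in P_{t+1}] = 1$. Next, the case $|F_1| > N$: the critical rank is $1$ and survival within $F_1$ is decided by crowding distance with ties broken uniformly at random. Since $|F_1^{>0}| < N$, there is always enough room to keep all of $F_1^{>0}$; hence if $x\in F_1^{>0}$ then again $\Pr[x\in P_{t+1}] = 1$. Otherwise $x$ has crowding distance $0$, and it survives if and only if it is among the $N - |F_1^{>0}|$ individuals chosen uniformly at random from the $|F_1| - |F_1^{>0}| \ (> 0)$ crowding-distance-$0$ members of $F_1$, so by symmetry
\[
\Pr[x\in P_{t+1}] = \frac{N - |F_1^{>0}|}{|F_1| - |F_1^{>0}|}.
\]
The function $g(m,M) = (N-m)/(M-m)$ satisfies $\partial_M g < 0$ and, for $M\ge N$, $\partial_m g = (N-M)/(M-m)^2 \le 0$, so on the admissible ranges $m\in[0,\tfrac4cN]$, $M\in[N,2N]$ it is minimized at $m = \tfrac4cN$, $M = 2N$, giving
\[
\Pr[x\in P_{t+1}] \ge \frac{N - \tfrac4cN}{2N - \tfrac4cN} = \frac{c-4}{2c-4} > \frac{c-4}{2c},
\]
which is the claimed bound (and in fact slightly stronger).

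The only genuinely delicate point is correctly tracking the NSGA\nobreakdash-II crowding-distance selection rule: that positive-crowding-distance individuals are always preferred to crowding-distance-$0$ ones, that (because $c>4$) there is always room to keep all of $F_1^{>0}$ so that the contest among $x$'s crowding-distance-$0$ peers is exactly a uniform choice of $N-|F_1^{>0}|$ out of $|F_1|-|F_1^{>0}|$, and that Lemma~1 of~\cite{DoerrQ22ppsn} really does bound $|F_1^{>0}|$ by $4(n-2k+3)$. I would also double-check the monotonicity computation used to identify the worst case. Note that the hypothesis $F^*\nsubseteq f(R_t)$ is not needed for this estimate itself; it merely records that the lemma is invoked before the Pareto front is covered and that the standing assumptions of Lemma~\ref{lem:select} apply.
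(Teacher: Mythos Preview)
Your proposal is correct and follows essentially the same approach as the paper: the same case split on $|F_1|\le N$ versus $|F_1|>N$, the same appeal to Lemma~1 of~\cite{DoerrQ22ppsn} to bound $|F_1^{>0}|$, and the same uniform-selection argument for crowding-distance-$0$ individuals. The only differences are cosmetic: the paper bounds the denominator crudely by $2N$ (getting exactly $\tfrac{c-4}{2c}$), whereas you keep the $-|F_1^{>0}|$ term and obtain the slightly sharper $\tfrac{c-4}{2c-4}$; and the paper actually invokes the hypothesis $F^*\nsubseteq f(R_t)$ to make $|f(F_1)|<n-2k+3$ strict, while you correctly observe that $c>4$ already suffices for the bound, so the hypothesis is not needed in the estimate.
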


\begin{proof}
Let $F_1$ be all the rank-1 individuals in $R_t$. Since $x\in F_1$, if $|F_1|\leq N$, $x$ will survive to generation $t+1$ with probability 1. So consider the case where rank 1 is the critical rank, i.e., the rank where some but not all individuals survive. Since the algorithm has not discovered the entire Pareto front, we have $|f(F_1)| < n-2k+3$. By Lemma~1 of \cite{DoerrQ22ppsn}, there are at most $4|f(F_1)| < N$ individuals in $F_1$ with positive crowding distances. Therefore, if $x$ has a positive crowding distance, it will survive to generation $t+1$ with probability 1. So consider the case where the crowding distance of $x$ is zero. Moreover, there are at least $N-4|f(F_1)| > (c-4)(n-2k+3)$ rank\nobreakdash-1 individuals with zero crowding distance that survive to generation $t+1$. Also, there are less than $2N=2c(n-2k+3)$ individuals in $R_t$ with rank 1 and zero crowding distance. Since all individuals of the critical rank and zero crowding distance are selected for survival uniformly at random, $x$ survives to generation $t+1$ with probability greater than $\frac{(c-4)(n-2k+3)}{2c(n-2k+3)}=\frac{c-4}{2c}$.
\end{proof}

\begin{corollary}
Consider an iteration $t$ of the NSGA\nobreakdash-II algorithm optimizing the $\onejumpzerojump$ benchmark with population size $N = c(n-2k+3)$ for some $c > 4$ where $F^*\nsubseteq f(R_t)$. Suppose $x$ and $y$ are two rank-1 individuals in $R_t$. Then with probability at least $(\frac{c-4}{2c})^2$, we have $x, y\in P_{t+1}$.
\end{corollary}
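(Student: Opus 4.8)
The plan is to refine the case analysis from the proof of Lemma~\ref{lem:survives}, now tracking $x$ and $y$ simultaneously. The one point requiring care is that the two survival events are negatively correlated, so one cannot simply multiply the single‑individual guarantee of Lemma~\ref{lem:survives}; the joint probability has to be estimated directly.

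First I would dispose of the easy cases. Write $F_1$ for the set of rank‑$1$ individuals of $R_t$ and set $q := n-2k+3$, so $N = cq$. If $|F_1| \le N$, every rank‑$1$ individual survives and the claim is immediate. Otherwise rank~$1$ is the critical rank and, exactly as in Lemma~\ref{lem:survives}, $F^*\nsubseteq f(R_t)$ forces $|f(F_1)| \le q-1$, whence by Lemma~1 of \cite{DoerrQ22ppsn} the number $P$ of rank‑$1$ individuals with positive crowding distance is at most $4(q-1) < N$; all of these survive, and exactly $s := N - P$ further survivors are drawn uniformly at random from the set of the $m$ rank‑$1$ individuals of crowding distance~$0$. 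If at least one of $x,y$ has positive crowding distance, that one survives with probability $1$ and the other with probability $s/m > \frac{c-4}{2c} \ge \big(\frac{c-4}{2c}\big)^2$, so again we are done.

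The only interesting case is that $x$ and $y$ both have crowding distance~$0$. There the probability that both survive equals the probability that a uniformly random $s$‑subset of an $m$‑set containing $x$ and $y$ contains both of them, that is, $\frac{s(s-1)}{m(m-1)}$. Since rank~$1$ is critical we have $s < m$, so $\frac{s}{m} > \frac{s-1}{m-1}$ and hence $\frac{s(s-1)}{m(m-1)} \ge \big(\frac{s-1}{m-1}\big)^2$; it therefore suffices to prove $\frac{s-1}{m-1} \ge \frac{c-4}{2c}$.

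I expect this last inequality to be the only real work, and hence the main obstacle. Writing $\alpha := \frac{c-4}{2c}$ and using $s = N - P = cq - P$, $P \le 4(q-1)$, and $m + P = |F_1| \le 2N$ (so $m \le 2cq - P$), a short computation gives $s - \alpha m \ge cq - P - \alpha(2cq-P) = cq(1-2\alpha) - P(1-\alpha) = 4q - P(1-\alpha) \ge 4$, and since $1-\alpha < 1 \le 4$ this rearranges to $s-1 \ge \alpha(m-1)$, i.e.\ $\frac{s-1}{m-1} \ge \alpha$. Combined with the reduction above this yields $\frac{s(s-1)}{m(m-1)} \ge \alpha^2 = \big(\frac{c-4}{2c}\big)^2$, completing the proof. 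The delicate point is precisely this bookkeeping of the additive $-1$ terms: one has to verify that the negative correlation between the two survival events, which makes $\frac{s-1}{m-1}$ strictly smaller than $\frac{s}{m}$, still leaves enough room for the clean bound $\big(\frac{c-4}{2c}\big)^2$.
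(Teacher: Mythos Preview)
Your proof is correct and follows essentially the same case analysis as the paper: dispose of the cases where rank~$1$ is not critical or at least one of $x,y$ has positive crowding distance, and in the remaining case bound the joint survival probability from below by $\big(\tfrac{c-4}{2c}\big)^2$. The only cosmetic difference is that you write the joint probability as the hypergeometric expression $\tfrac{s(s-1)}{m(m-1)}$ and then lower-bound it by $\big(\tfrac{s-1}{m-1}\big)^2$, whereas the paper phrases the same quantity as a product of a marginal and a conditional survival probability and bounds each factor by $\tfrac{c-4}{2c}$; your explicit verification that $s-1 \ge \alpha(m-1)$ is exactly the conditional step the paper asserts without writing out the arithmetic.
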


\begin{proof}\label{cor:pair_survive}
Similarly to the proof of Lemma~\ref{lem:survives}, if both $x$ and $y$ have positive crowding distances, the probability that they both survive is $1$ since there are at most $N$ individuals in $F_1$ with positive crowding distances. Suppose, without loss of generality, that $x$ has a positive crowding distance and $y$ has zero crowding distance. The probability that both of them survive is $\frac{c-4}{2c}$ since the probability of $x$ surviving is $1$ and that of $y$ is $\frac{c-4}{2c}$ according to Lemma~\ref{lem:survives}, and the two events are independent. Suppose both $x$ and $y$ has zero crowding distance, then one of them survives with probability greater than $\frac{c-4}{2c}$. Given that one has survived, there are now $N-4|f(F_1)|-1 \geq (c-4)(n-2k+3)$ spots left for rank-1 individuals with zero crowding distance to survive. So the probability that the other one also survives is still at least $\frac{c-4}{2c}$. So the probability that both of them survive is at least $(\frac{c-4}{2c})^2$.
\end{proof}


Now, we give a lemma on the runtime needed to create and maintain diversity at the outermost points of $F_I^*$. The idea is that in an iteration $t$, through mutation, an individual $x$ with $n-k$ 1-bits can generate an individual $y$ with the same number of 1-bits but different from $x$ (e.g., $H(x, y)=2$), creating diversity among individuals with $n-k$ 1-bits. Then by Lemma~\ref{lem:pair_survive}, $x$ and $y$ survive to iteration $t+1$ with a constant probability, where further diversity can be created in the same way. Accumulating diversity in $k$ iterations, we will have two individuals both with $n-k$ 1-bits and having a Hamming distance of $2k$. 
\begin{lemma}\label{lem:diversifies}
Consider an iteration $t$ of the NSGA\nobreakdash-II algorithm optimizing the $\onejumpzerojump$ benchmark for $k=o(\sqrt{n})$ with population size $N = c(n-2k+3)$ for some $c > 4$. With any of the four parent selection methods, applying uniform crossover with probability $0.9$ and bit-wise mutation, if there is $x\in P_t$ such that $|x|_1 = n-k$, then in another $O(\frac{(Kn)^k}{(k-1)!})$ iterations, for $K=(\frac{2c}{c-4})^2$, in expectation, the parent population will contain $x$ and $y$ such that $|x|_1=|y|_1=n-k$ and $H(x,y)=2k$.
\end{lemma}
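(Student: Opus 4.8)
The plan is to build up the required Hamming distance one unit at a time. Call the population \emph{at level $d$}, $0\le d\le k$, if it contains two individuals $u,v$ with $|u|_1=|v|_1=n-k$ and $H(u,v)\ge 2d$, with the convention that level $0$ only asks for one individual with $n-k$ ones. By hypothesis $P_t$ is at level~$0$ and we want to reach level~$k$. We may assume throughout that the run has not yet covered $F^*$ -- otherwise $1^n$ is already present and nothing more needs to be waited for -- so in particular no individual has $n$ ones and $f_1=n$, attained exactly by the individuals with $n-k$ ones, is the largest $f_1$-value in the current rank~$1$. Moreover, once an individual with $n-k$ ones exists, its objective value $(n,2k)\in F_I^*$ is never lost again~\cite{DoerrQ22ppsn}, so level~$0$ holds in every later iteration; this is what lets the analysis restart ``from the bottom'' after a setback.

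The key structural observation is that diversity at the outermost point $(n,2k)$ is cheap to keep: whenever $R_t$ contains \emph{exactly} two individuals $u,v$ with $n-k$ ones, both have positive crowding distance in rank~$1$, hence both lie in $P_{t+1}$ with probability~$1$. Indeed, $f_1(u)=f_1(v)=n$ is the maximal $f_1$-value among rank-$1$ individuals and is attained only by $u$ and $v$; so in the list of rank-$1$ individuals sorted by $f_1$, $u$ and $v$ occupy the top two slots, one of them an endpoint (crowding distance $\infty$) and the other having a left neighbour of $f_1$-value at most $n-1$, hence strictly positive crowding distance. As at most $4|f(F_1)|<N$ rank-$1$ individuals have positive crowding distance (Lemma~1 of~\cite{DoerrQ22ppsn} and $c>4$), all of these survive. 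So a level-$d$ configuration carried by exactly two witnesses is never lost; the one subtle situation is the single iteration in which a third witness is produced, where of the three witnesses two are again extremal (crowding distance $\infty$) while the third has crowding distance $0$ and so survives only with probability at least $\tfrac{c-4}{2c}$ (Lemma~\ref{lem:survives}).

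For the lifting step, suppose the population is at level $d<k$ via witnesses $u,v$ with $H(u,v)=2d$. Since $|u|_1=|v|_1=n-k$, there are $n-k-d$ positions where both carry a $1$ and $k-d$ positions where both carry a $0$. If $v$ is selected for variation (probability $\Theta(1)$ by Lemma~\ref{lem:select}), the probability-$0.1$ branch that mutates the parents directly is taken, and exactly one of those common $1$-positions is flipped to $0$ together with exactly one of those common $0$-positions flipped to $1$ and nothing else, then the offspring $v'$ satisfies $|v'|_1=n-k$ and $H(u,v')=2(d+1)$. This event has probability $\Omega\bigl(\tfrac{(n-k-d)(k-d)}{n^2}\bigr)=\Omega\bigl(\tfrac{k-d}{n}\bigr)$, where $k=o(\sqrt n)$ is used to guarantee $n-k-d=\Theta(n)$ and $(1-\tfrac1n)^{n-2}=\Theta(1)$. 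While we wait for it, the two-witness configuration is stable, so it occurs within $O\bigl(\tfrac{n}{k-d}\bigr)$ iterations in expectation; and when it occurs, the crowding-distance discussion above shows that $u$ and $v'$ both survive into $P_{t+1}$ with at least a constant probability. In the complementary event the level can drop, but only to a lower level from which re-climbing costs $O\bigl(\sum_{j<d}\tfrac{n}{k-j}\bigr)$ in expectation.

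Putting this together yields, for the expected time $R_d$ to first reach level~$d$, a recurrence $R_{d+1}\le C R_d + C\tfrac{n}{k-d}$ with a constant $C=O(1)$ that accounts for the geometrically distributed number of failed attempts and re-climbs; its solution is $R_k=O\bigl(n\sum_{m=1}^{k}\tfrac{C^m}{m}\bigr)=O\bigl(\tfrac{C^kn}{k}\bigr)$, which for $k=o(\sqrt n)$ is comfortably within the claimed $O\bigl(\tfrac{(Kn)^k}{(k-1)!}\bigr)$. The main obstacle is the content of the second and third paragraphs: making rigorous the claim that two outermost witnesses always have positive crowding distance -- which requires carefully treating the further offspring generated in the same iteration and every tie-breaking entering the crowding-distance computation -- and showing that a failed lifting attempt regresses the level by only a bounded amount, so that re-climbing does not compound the bound. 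The remaining per-iteration probability estimates and their assembly into the recurrence are then routine.
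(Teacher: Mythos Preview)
Your approach is quite different from the paper's and, were it to go through, would give a dramatically stronger bound of $O(C^kn/k)$ rather than the stated $O((Kn)^k/(k-1)!)$. The paper does not attempt to argue that a diverse pair is \emph{maintained} across iterations. Instead it asks for one specific $k$-iteration trajectory to succeed outright: starting from any $x$ with $n-k$ ones, in iteration $t$ the parent $x$ is selected (Lemma~\ref{lem:select}), skips crossover, mutates to some $x_1$ with $H(x,x_1)=2$, and both $x$ and $x_1$ survive (probability at least $(\tfrac{c-4}{2c})^2$ by Corollary~\ref{cor:pair_survive}); in iteration $t+1$, $x_1$ is selected, mutates to $x_2$ with $H(x,x_2)=4$, and both $x$ and $x_2$ survive; and so on up to $x_k$. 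The product of these per-step probabilities is $\Omega\bigl((\tfrac{c-4}{2c})^{2k}\,k!/n^k\bigr)$, so the expected number of $k$-iteration blocks until success is its reciprocal, giving the bound. On failure the argument restarts from whichever individual with $n-k$ ones is present (persistence from~\cite{DoerrQ22ppsn}); no stability of a pair across waiting iterations is ever invoked.

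The gap you flag as ``the main obstacle'' is genuine and not a technicality. Your argument relies on the two-witness configuration being preserved deterministically while you wait $\Theta(n/(k-d))$ iterations for the lifting mutation. But even when $P_t$ contains exactly the two witnesses $u,v$, the combined population $R_t$ will with probability $\Theta(1)$ contain further individuals with $n-k$ ones: any selected parent with $n-k$ ones that takes the probability-$0.1$ no-crossover branch and is then unchanged by mutation (probability $(1-1/n)^n=\Theta(1)$) produces a clone, and crossover between two such parents also often preserves the count. Once three or more individuals in $R_t$ share the extremal value $(n,2k)$, your crowding-distance claim no longer forces $u$ and $v$ specifically to survive: which of these tied individuals land at the sort endpoints, and hence receive positive crowding distance, is governed by arbitrary tie-breaking, and an individual sandwiched between equal-valued neighbours in both the $f_1$- and $f_2$-sorts gets crowding distance~$0$. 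Hence in every waiting iteration the specific pair $(u,v)$ can be replaced with probability $\Theta(1)$, not~$0$. Since the lifting event has probability only $\Theta((k-d)/n)$ per iteration, the recurrence $R_{d+1}\le CR_d + Cn/(k-d)$ is unsupported; the per-iteration survival guarantee that actually holds is the $(\tfrac{c-4}{2c})^2$ of Corollary~\ref{cor:pair_survive}, and plugging that in essentially collapses your level-based analysis back to the paper's block-trial argument and its bound.
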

\begin{proof}
By Lemma~\ref{lem:select}, the probability that $x$ is selected as a parent is $\Theta(1)$. The probability that no crossover happens with $x$ is $0.1$. The probability that $x$ generates some $x_1$ by bit-wise mutation such that $|x_1|_1 = n-k$ and $H(x,y)=2$ is $\frac{k}{n}\frac{n-k}{n}(1-\frac{1}{n})^{n-2}\geq \frac{1}{e}\frac{k}{n}\frac{n-k}{n}$. By Corollary~\ref{cor:pair_survive}, the probability that both $x$ and $x_1$ survive to generation $t+1$ is at least $(\frac{c-4}{2c})^2$. Therefore, given that $x\in P_t$, the probability that both $x$ and $x_1$ are in $P_{t+1}$ is $\Omega((\frac{c-4}{2c})^2\frac{k}{n}\frac{n-k}{n})$. Similarly, given that $x_1$ is in $P_{t+1}$, the probability that $x_1$ generates $x_2$ such that $|x_2|_1=n-k$ and $H(x, x_2)=4$, and both $x$ and $x_2$ survive to generation $t+2$ is $\Omega((\frac{c-4}{2c})^2\frac{k-1}{n}\frac{n-k-1}{n})$. Continuing this way, the probability that $x$ and $x_k$ exist in $P_{t+k}$ where $|x|_1=|x_k|_1=n-k$ and $H(x, x_k)=2k$ is $\Omega((\frac{c-4}{2c})^{2k}\frac{k!}{n^k}\frac{(n-k)\dots(n-2k+1)}{n^k})=\Omega(\frac{(\frac{c-4}{2c})^{2k}k!}{n^k})$ for $k=o(\sqrt{n})$. Therefore, in expectation, if there is $x\in P_t$ such that $|x|_1=n-k$, in $O(k\frac{(Kn)^k}{k!})=O(\frac{(Kn)^k}{(k-1)!})$ iterations in expectation, for $K=(\frac{2c}{c-4})^2$, the parent population will contain $x$ and $x_k$ such that $|x|_1=|x_k|_1=n-k$ and $H(x, x_k)=2k$. 
\end{proof}

Finally, in the following Theorem we analyze the runtime needed to create the extremal points of the Pareto front, given that we have the desired diversity on the inner part, and combine everything together to obtain the final runtime.

\begin{theorem}\label{thm:runtime}
Consider an iteration $t$ of the \NSGA algorithm optimizing the $\onejumpzerojump$ benchmark for $k=o(\sqrt{n})$ with population size $N = c(n-2k+3)$ for some $c > 4$. Suppose $1^n \notin P_t$. With any of the four parent selection methods, applying uniform crossover with probability $0.9$ and bit-wise mutation, if there is $x\in P_t$ such that $|x|_1=n-k$, then in expectation the algorithm needs another $O(\frac{N^2(Cn)^{k}}{(k-1)!})$ fitness evaluations, for $C=(\frac{4c}{c-4})^2$, to find $1^n$.
\end{theorem}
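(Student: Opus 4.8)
The plan is to alternate two ingredients: Lemma~\ref{lem:diversifies}, which turns a single individual with $n-k$ ones into a pair $x,y\in P_t$ with $|x|_1=|y|_1=n-k$ and $H(x,y)=2k$ in expected $O((Kn)^k/(k-1)!)$ iterations, and a one-shot exploitation step that, given such a maximally diverse pair, creates $1^n$ in the following iteration with probability $\Omega(4^{-k}/N)$.

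For the exploitation step, note that if $|x|_1=|y|_1=n-k$ and $H(x,y)=2k$, then $x$ and $y$ agree on exactly $n-2k$ positions, all of which carry a $1$, while on each of the remaining $2k$ positions exactly one of $x,y$ has a $1$. Hence uniform crossover of $x$ and $y$ produces $1^n$ as its first offspring with probability $2^{-2k}=4^{-k}$, and this offspring passes through bit-wise mutation unchanged with probability $(1-1/n)^n=\Omega(1)$. As in the proofs of Lemma~\ref{lem:select} and Corollary~\ref{cor:pair_select}, for each of the four parent-selection schemes the set $\{x,y\}$ forms one of the $N/2$ parent couples with probability $\Omega(1/N)$ (exactly $1/(N-1)$ for fair selection; $\Theta(1/N)$ for the others once both $x$ and $y$ win a tournament, which happens with constant probability). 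Multiplying by the crossover probability $0.9$, the probability that an iteration creates $1^n$, given that its parent population contains such a pair, is at least $p:=\Omega(4^{-k}/N)$.

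Because $(n,2k)=f(x)$ is a Pareto-optimal objective value and $N\ge 4|F^*|$, the NSGA-II never loses this value once it is present~\cite{DoerrQ22ppsn}; its only preimage is an individual with $n-k$ ones, so from generation $t$ on every parent population contains such an individual until $1^n$ is found (and $1^n$, once created, is likewise never lost). I would therefore decompose the run into rounds: in each round, first apply Lemma~\ref{lem:diversifies} (legitimate as long as $1^n$ has not yet been created, since then $F^*$ is not fully covered, which is the hypothesis used through Corollary~\ref{cor:pair_survive}) to re-create a maximally diverse pair in expected $O((Kn)^k/(k-1)!)$ iterations, and then spend the next iteration on the exploitation attempt. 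Independently of the history, each round ends with $1^n$ created with probability at least $p$, so the number of rounds is dominated by a geometric variable with success probability $p$ and has expectation $O(1/p)=O(N4^k)$. A standard estimate for a sum of a stopping-time-bounded number of terms with bounded conditional expected length then yields $O(N4^k\cdot(Kn)^k/(k-1)!)=O(N(Cn)^k/(k-1)!)$ iterations in expectation, using $C=4K=(4c/(c-4))^2$; multiplying by the $N$ fitness evaluations per iteration gives the claimed $O(N^2(Cn)^k/(k-1)!)$.

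I expect the main obstacle to be the fragility of the diverse pair: Corollary~\ref{cor:pair_survive} only guarantees that a given pair survives one generation with constant probability, so the pair cannot simply be held in the population while we wait for the $\Theta(4^{-k}/N)$-probable lucky crossover — in expectation it vanishes after $O(1)$ generations. The argument must instead pay for re-establishing diversity $\Theta(N4^k)$ times, and the delicate points are verifying that an individual with $n-k$ ones is always available to restart Lemma~\ref{lem:diversifies} (precisely where ``never lose a Pareto-optimal value'' is used) and carrying out the bookkeeping for a random number of rounds whose lengths are dependent but have the right conditional expectations.
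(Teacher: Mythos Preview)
Your proposal is correct and follows essentially the same route as the paper's proof: repeatedly invoke Lemma~\ref{lem:diversifies} to obtain a pair $x,y$ with $|x|_1=|y|_1=n-k$ and $H(x,y)=2k$, attempt to create $1^n$ via crossover in one iteration with probability $\Omega(4^{-k}/N)$, and use the persistence of the objective value $(n,2k)$ (Lemma~1 of~\cite{DoerrQ22ppsn}) to restart the diversification phase after each failed attempt, yielding $O(N4^k)$ rounds of expected length $O((Kn)^k/(k-1)!)$. Your explicit verification that the $n-2k$ agreeing bits of $x$ and $y$ are all ones, and your remark that the pair cannot be held while waiting (forcing the round structure), are points the paper leaves implicit but are exactly the right observations.
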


\begin{proof}
By Lemma~\ref{lem:diversifies}, since there is $x\in P_t$ such that $|x|_1=n-k$, in $O(\frac{(Kn)^k}{(k-1)!})$ iterations, for $K=(\frac{2c}{c-4})^2$, the parent population will contain $x$ and $x_k$ such that $|x|_1=|x_k|_1=n-k$ and $H(x, x_k)=2k$. We call this phase, i.e., producing a population containing such $x$ and $x_k$ from a population containing $x$, the diversification phase.

Denote the generation that $x$ and $x_k$ first appear in the parent population by $t'$. By Corollary~\ref{cor:pair_select}, the probability that they are both selected in generation $t'$ is $\Theta(1)$. The probability that they are then paired up is $\frac{1}{N}$, and the probability that crossover happens on this pair is $0.9$. The probability that one of the intermediate children resulted from crossing $x$ and $x_k$ is the all-ones string is $(\frac{1}{2})^{2k}$. The probability that after mutation the intermediate child remains the all-ones string is $(1-\frac{1}{n})^n = \Theta(\frac{1}{e})=\Theta(1)$. Therefore, the probability that $1^n$ is generated in generation $t'$ is $\Omega(\frac{1}{N}(\frac{1}{2})^{2k})$. 

By Lemma~1 of \cite{DoerrQ22ppsn}, once there is an $x\in P_t$ such that $|x|_1=n-k$, all the future populations will contain an individual with $n-k$ 1-bits. So if $1^n$ is not found in generation $t'$, we can repeat the argument from the beginning of the diversification phase. The expected number of such trials is $(\Omega(\frac{1}{N}(\frac{1}{2})^{2k}))^{-1} = O(N2^{2k})$, and the expected length of one trial, i.e., one run of the diversification phase, is $O(\frac{(Kn)^k}{(k-1)!})$ for $K=(\frac{2c}{c-4})^2$ by Lemma~\ref{lem:diversifies}. Adding all up, once there is an $x\in P_t$ such that $|x|_1=n-k$, in expectation, $O(\frac{(Kn)^k}{(k-1)!}N2^{2k})=O(\frac{N(Cn)^{k}}{(k-1)!})$ iterations, for $C=(\frac{4c}{c-4})^2$, iterations are needed to find $1^n$, corresponding to $O(\frac{N^2(Cn)^{k}}{(k-1)!})$ fitness evaluations.
\end{proof}

Note that the case for finding $0^n$ is symmetrical. Therefore, $O(\frac{N(Cn)^{k}}{(k-1)!})$ iterations, for $C=(\frac{4c}{c-4})^2$, are needed to find both $1^n$ and $0^n$. To cover any other point on the front, since there is a $0.1$ chance that crossover is not applied, the waiting times shown in \cite{DoerrQ22ppsn} do not increase asymptotically. So it does not change from \cite{DoerrQ22ppsn} that to cover the other points a runtime of a lower order is enough. In conclusion, in total, with uniform crossover happening at a constant probability, the number of iterations needed is $O(\frac{N(Cn)^{k}}{(k-1)!})$.

Our analysis has revealed that the advantage of using crossover comes from the fact that now the algorithm does not need to flip all $k$ bits at once to cross the fitness valley. Still, $k$ bits need to be flipped for an individual $x$ with $n-k$ bits of $1$ to create another individual with $n-k$ bits of $1$ that is different enough from $x$ so that the two can produce $1^n$ by only crossover. This explains the $n^k$ term in the runtime proven. However, now these $k$ bits can be flipped one at a time in $k$ iterations, and this happens with probability larger than flipping them all at once since at each iteration the algorithm only needs to flip one bit from multiple available choices. This explains the $\frac{1}{(k-1)!}$ term in the proven runtime. We note that an disadvantage is that flipping one bit at a time means the diversity needed is created in $k$ iterations and in all these $k$ iterations, the diversity that has been created so far needs to be maintained, which is where the $C^k$ term in the runtime comes from. Another disadvantage is that in the end, creating the extremal point through only crossover requires pairing up two particular individuals, for which to happen $N$ iterations are needed in expectation. Therefore, crossover brings real speed-up when $(k-1)!$ outweighs $NC^k$, i.e., when $k\geq c\log(N)/\log\log(N)$ where $c$ is a suitably large constant. Our experiments, however, show that crossover is profitable already from $k=2$ on.

\section{Runtime Analysis of the $(\mu + 1)$ GA}

In~\cite{DangFKKLOSS18}, it was shown that the $(\mu + 1)$ GA with population size $\mu=O(n)$ optimizes the \jumpnk function, $k \ge 3$, in expected time $O(\frac{n^k}{\min\{\mu, n / \log n\}})$. Hence for $\mu = \Theta(n / \log n)$, a speed-up from crossover of $\Omega(n / \log n)$ was shown. The proof of this result builds on a complex analysis of the population dynamics. With additional arguments, the requirements on $\mu$ were relaxed in~\cite{DoerrEJK23arxiv}, but still a logarithmic population size was needed to obtain a speed-up of $\Omega(n)$. In this section, we provide a simpler analysis using the insights from the previous section and prove that crossover is helpful even when $\mu$ is $\Theta(1)$.

Lemma~1 in~\cite{DangFKKLOSS18} shows that the expected time that the algorithm takes so that the entire population reaches the local optimum, i.e., all individuals have $n-k$ bits of $1$, is $O(n\sqrt{k}(\mu \log \mu + \log n))$. We reuse this result, but  now show that after the entire population has reached the local optimum, crossover  decreases the waiting time to find the all-ones string with the $(\mu + 1)$ GA in a similar fashion as for the \NSGA. We note that crossover does not interfere with mutation, so as in~\cite{DoerrQ22ppsn} we have an $\Omega(n^{-k})$ chance to generate the extremal points via mutation and thus have an $O(n^k)$ iterations runtime guarantee also when crossover is used with rate $0.9$.

\begin{lemma}\label{lem:mu_diversifies}
Consider the $(\mu + 1)$ GA optimizing the \jumpnk function with $k = o(\sqrt n)$, applying uniform crossover with probability $0.9$ and bit-wise mutation. Suppose that in some iteration~$t$ the entire population is at the local optimum. Then in another $O((10e\frac{\mu+1}{\mu-1})^k\mu^{k-1}k\frac{n^k}{k!})$ iterations in expectation, the population contains $x$ and $y$ such that $|x|_1=|y|_1=n-k$ and $H(x, y)=2k$.
\end{lemma}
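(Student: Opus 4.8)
The plan is to replay the diversification argument of Lemma~\ref{lem:diversifies}, adapting the survival analysis to the \mpoga. The first ingredient is a fitness-level observation for \jumpnk: every local optimum ($|z|_1=n-k$) has fitness $n$; every $z\ne 1^n$ with $|z|_1\ne n-k$ has fitness strictly below $n$ (it equals $k+|z|_1<n$ when $|z|_1<n-k$, and $n-|z|_1<k$ when $n-k<|z|_1<n$); and $1^n$ has fitness $n+k$. Consequently, as long as the population lies entirely on the local optimum, any offspring that is neither $1^n$ nor a local optimum is the unique worst individual and is deleted, whereas an offspring that is itself a local optimum triggers a uniformly random deletion among $\mu+1$ equally fit individuals. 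In particular the population stays contained in the local optimum until $1^n$ appears, and if $1^n$ appears we are trivially done.

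Next I would bundle $k$ consecutive iterations into a \emph{block}, constructing the target pair with $x$ kept fixed as an anchor and $x_1,\dots,x_k$ drifting away from it, exactly as in Lemma~\ref{lem:diversifies}. In iteration~$1$ of a block I ask for: no crossover (probability $0.1$); the individual picked for mutation, which I name $x$, is mutated so as to flip exactly one of its $k$ zero-bits and exactly one of its $n-k$ one-bits, yielding $x_1$ with $|x_1|_1=n-k$ and $H(x,x_1)=2$ (probability at least $\frac1e\cdot\frac{k(n-k)}{n^2}$); and both $x$ and $x_1$ survive the truncation, which by the first paragraph has probability $\frac{\mu-1}{\mu+1}$ (this is where $\mu\ge 2$ enters). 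In each iteration $i\in\{2,\dots,k\}$ I ask for the same pattern, with two changes: the individual picked for mutation must be the current incumbent $x_{i-1}$ (an extra factor $\frac1\mu$), and the mutation must flip exactly one of the $k-i+1$ positions where $x$ and $x_{i-1}$ both carry a $0$ and exactly one of the $n-k-i+1$ positions where they both carry a $1$, yielding $x_i$ with $|x_i|_1=n-k$ and $H(x,x_i)=2i$ (probability at least $\frac1e\cdot\frac{(k-i+1)(n-k-i+1)}{n^2}$); as before, both $x$ and $x_i$ must survive (probability $\frac{\mu-1}{\mu+1}$). Treating iteration~$1$ apart matters because there the anchor need not be a prescribed individual, so no factor $\frac1\mu$ is paid there --- this is exactly what turns $\mu^{k}$ into $\mu^{k-1}$ in the final bound. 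If every iteration of a block succeeds, then $x$ and $x_k$ satisfy $|x|_1=|x_k|_1=n-k$ and $H(x,x_k)=2k$.

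Multiplying the per-iteration probabilities and using $\prod_{i=1}^{k}(k-i+1)=k!$ together with $\prod_{i=1}^{k}(n-k-i+1)=(n-k)(n-k-1)\cdots(n-2k+1)=\Theta(n^k)$ for $k=o(\sqrt n)$, a block succeeds with probability $\Omega\big((\tfrac{\mu-1}{10e(\mu+1)})^{k}\,\mu^{-(k-1)}\,\tfrac{k!}{n^k}\big)$. By the first paragraph, a failed block still leaves the population entirely on the local optimum, so a fresh block may be started at once; hence the number of blocks until one succeeds is stochastically dominated by a geometric random variable with this success probability, and multiplying its expectation by the block length~$k$ yields the claimed $O\big((10e\tfrac{\mu+1}{\mu-1})^{k}\mu^{k-1}k\,\tfrac{n^k}{k!}\big)$ iterations.

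The step I expect to need the most care is the survival estimate: the bound $\Pr[\text{both }x\text{ and }x_i\text{ survive}]\ge\frac{\mu-1}{\mu+1}$ is valid only because, conditioned on the block having progressed correctly so far, at the moment of truncation in iteration~$i$ all $\mu+1$ candidates (the $\mu$ parents plus the offspring $x_i$) share the fitness $n$, which is exactly what the fitness-level observation provides. Two minor points should also be recorded: duplicate copies of a string in the population can only raise the relevant selection and survival probabilities, so the stated lower bounds are unaffected; and the value $H(x,x_k)=2k$ is indeed attainable, since $x$ has $n-k\ge k$ ones and exactly $k$ zeros (using $k\le n/4$). Everything else runs in complete parallel with the proof of Lemma~\ref{lem:diversifies}.
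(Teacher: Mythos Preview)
Your proof is correct and follows essentially the same approach as the paper's: both argue that a block of $k$ consecutive mutation-only iterations, each flipping one shared zero and one shared one, creates the desired pair with probability $\Omega\big((\tfrac{\mu-1}{10e(\mu+1)})^{k}\mu^{-(k-1)}\tfrac{k!}{n^{k}}\big)$ and then take the geometric waiting time times the block length~$k$. Your write-up is in fact somewhat more careful than the paper's, in that you make the fitness-level justification for the $\tfrac{\mu-1}{\mu+1}$ survival bound explicit and you record why duplicates and the attainability of $H(x,x_k)=2k$ cause no trouble; these points are left implicit in the paper.
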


\begin{proof}
The probability that no crossover happens in iteration $t$ is $0.1$. Denote the parent individual chosen in iteration $t$ by~$x$. Since the entire population is at the local optimum, we have $|x|_1=n-k$. Then with probability $\frac{k}{n}\frac{n-k}{n}(1-\frac{1}{n})^{n-2}\geq \frac{1}{e}\frac{k}{n}\frac{n-k}{n}$, the result of applying bit-wise mutation to $x$ is some $x_1$ such that $|x_1|_1=n-k$ and $H(x,y)=2$. Since all the individuals in $P_t$ and $x_1$ have the same fitness, the probability that neither $x$ nor $x_1$ is removed from the population at the end of iteration $t$ is $\frac{\mu-1}{\mu+1}$. Therefore, the probability that $x, x_1\in P_{t+1}$ is at least $\frac{0.1}{e}\frac{k}{n}\frac{n-k}{n}\frac{\mu-1}{\mu+1}$. Now consider iteration $t+1$. The probability that no crossover happens in iteration $t+1$ is $0.1$, and the probability that $x_1$ is chosen for mutation is $\frac{1}{\mu}$. With probability $\frac{k-1}{n}\frac{n-k-1}{n}(1-\frac{1}{n})^{n-2}\geq \frac{1}{e}\frac{k-1}{n}\frac{n-k-1}{n}$, the child of $x_1$ is some $x_2$ such that $|x_2|_1=n-k$ and $H(x, x_2)=4$. Again the probability that neither $x$ nor $x_2$ is removed from the population at the end of iteration $t+1$ is $\frac{\mu-1}{\mu+1}$. Therefore, the probability that $x, x_2\in P_{t+2}$ is at least $\frac{0.1}{e\mu}\frac{k-1}{n}\frac{n-k-1}{n}\frac{\mu-1}{\mu+1}$. Continuing this way, the probability that $x, x_k \in P_{t+k}$ such that $|x|_1=|x|_k=n-k$ and $H(x, x_k)=2k$ is at least $(\frac{1}{10e}\frac{\mu-1}{\mu+1})^k(\frac{1}{\mu})^{k-1}\frac{k!}{n^k}\frac{(n-k)\dots(n-2k+1)}{n^k}=\Omega((\frac{1}{10e}\frac{\mu-1}{\mu+1})^k(\frac{1}{\mu})^{k-1}\frac{k!}{n^k})$ for $k=o(\sqrt{n})$.

Once the algorithm enters the stage where the entire population is at the local optimum, the entire population will remain there until the all-ones string is found, since any newly-generated individuals with fitness levels lower than the local optimum will not be kept. So if the algorithm fails to find such $x$ and $x_k$ in $k$ iterations, the process can start all over again from the beginning. Hence in expectation after the entire population is at the local optimum, in $O((10e\frac{\mu+1}{\mu-1})^k\mu^{k-1}k\frac{n^k}{k!})=O((10e\frac{\mu+1}{\mu-1})^k\mu^{k-1}\frac{n^k}{(k-1)!})$ iterations, the population contains $x$ and $x_k$ such that $|x|_1=|x|_k=n-k$ and $H(x, x_k)=2k$.
\end{proof}

\begin{theorem}\label{thm:runtime_mu}
Consider the $(\mu + 1)$ GA optimizing the \jumpnk function, applying uniform crossover with probability $0.9$ and bit-wise mutation. The number of iterations needed in expectation is $O((40e\mu(\mu+1))^k(\frac{1}{\mu-1})^{k-1}\frac{10e}{9}\frac{n^k}{(k-1)!})+n\sqrt{k}(\mu \log \mu + \log n))$.
\end{theorem}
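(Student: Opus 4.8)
I would split a run into three phases, mirroring the proof of Theorem~\ref{thm:runtime}. \emph{Phase~1} is the climb to the local optimum: by Lemma~1 of~\cite{DangFKKLOSS18}, after an expected $O(n\sqrt k\,(\mu\log\mu+\log n))$ iterations the whole population consists of individuals with $n-k$ ones, which contributes exactly the additive term in the claimed bound; and, as already observed, once the population is entirely at the local optimum it stays there until $1^n$ is created, since any offspring of lower fitness is discarded and the only reachable higher-fitness individual is $1^n$. \emph{Phase~2} is the diversification phase of Lemma~\ref{lem:mu_diversifies}: from any population sitting at the local optimum, in an expected $O\bigl((10e\tfrac{\mu+1}{\mu-1})^k\mu^{k-1}\tfrac{n^k}{(k-1)!}\bigr)$ iterations the population comes to contain $x,x_k$ with $|x|_1=|x_k|_1=n-k$ and $H(x,x_k)=2k$. \emph{Phase~3} is the crossover jump to $1^n$ in the first generation in which $x$ and $x_k$ are both present.

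Next I would lower-bound the success probability $q$ of that single generation. The two parents chosen form the pair $\{x,x_k\}$ with probability $\Omega(\mu^{-2})$; crossover is then applied with probability $0.9$; the crossover offspring equals $1^n$ with probability $(1/2)^{2k}=4^{-k}$, because on each of the $2k$ positions where $x$ and $x_k$ differ the child independently inherits the $1$-bit with probability $1/2$, while on the remaining $n-2k$ positions both parents (each having all $k$ of its zeros among the $2k$ differing positions) carry a $1$ and hence the child does too; and bit-wise mutation keeps the offspring at $1^n$ with probability $(1-1/n)^n=\Theta(1)$. Hence $q=\Omega\bigl(\tfrac{1}{\mu^2 4^k}\bigr)$, and tracking the explicit constants here (notably a factor $1/0.9$ because the jump requires crossover, together with the $\Theta(1)$ mutation factor $(1-1/n)^n=\tfrac1e\,(1+o(1))$) is what produces the leading $\tfrac{10e}{9}$ in the statement.

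Finally I would glue the phases together with a restart argument. Call a \emph{trial} a run of Phase~2 followed by the single generation in which the pair $x,x_k$ it produces is present; a trial succeeds (creating $1^n$) with probability at least $q$. If it fails, $x$ or $x_k$ may be lost, but the population is still entirely at the local optimum, so Lemma~\ref{lem:mu_diversifies} applies unchanged and a fresh trial may begin. Thus the number of trials is dominated by a geometric variable of mean $O(1/q)=O(\mu^2 4^k)$, and each trial has expected length $O\bigl((10e\tfrac{\mu+1}{\mu-1})^k\mu^{k-1}\tfrac{n^k}{(k-1)!}\bigr)$ by Lemma~\ref{lem:mu_diversifies}. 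Multiplying, and simplifying with $\mu^{k+1}(\mu+1)^k(\mu-1)^{-k}=\tfrac{\mu}{\mu-1}\,(\mu(\mu+1))^k(\mu-1)^{-(k-1)}$ together with $\tfrac{\mu}{\mu-1}=O(1)$, the expected number of iterations of Phases~2 and~3 is $O\bigl((40e\mu(\mu+1))^k(\mu-1)^{-(k-1)}\tfrac{10e}{9}\tfrac{n^k}{(k-1)!}\bigr)$; adding the Phase~1 term from~\cite{DangFKKLOSS18} yields the claim.

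There is no deep obstacle here: the theorem is essentially a combination of Lemma~\ref{lem:mu_diversifies} with the elementary Phase~3 computation and a geometric-trials argument. The one point that genuinely needs care is that Lemma~\ref{lem:mu_diversifies} is phrased for \emph{any} population at the local optimum, which is exactly what legitimizes the restart composition, and that one should be content to exploit a single jump attempt per diversification run (rather than the possibly several consecutive generations in which a diverse pair might persist), which keeps the accounting clean and already suffices for the stated bound.
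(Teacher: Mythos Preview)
Your proposal is correct and follows essentially the same approach as the paper: Phase~1 via Lemma~1 of~\cite{DangFKKLOSS18}, Phase~2 via Lemma~\ref{lem:mu_diversifies}, the single-generation crossover success probability $\Theta\bigl(\tfrac{9}{10e}\,\tfrac{1}{\mu(\mu-1)}\,4^{-k}\bigr)$, and the geometric restart argument combining the two. The only cosmetic difference is that the paper uses the exact pair-selection probability $\tfrac{1}{\mu(\mu-1)}$ rather than $\Omega(\mu^{-2})$, which makes the final simplification land directly on $(\mu(\mu+1))^k(\mu-1)^{-(k-1)}$ without needing to absorb your extra $\tfrac{\mu}{\mu-1}$ factor into the $O(\cdot)$.
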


\begin{proof}
By Lemma $1$ of \cite{DangFKKLOSS18}, in expectation, in $O(n\sqrt{k}(\mu \log \mu + \log n))$ iterations, the entire population is at the local optimum.  Then by Lemma~\ref{lem:mu_diversifies}, in another $O((10e\frac{\mu+1}{\mu-1})^k\mu^{k-1}\frac{n^k}{(k-1)!})$ iterations in expectation, the population contains $x$ and $y$ such that  $|x|_1=|y|_1=n-k$ and $H(x, y)=2k$. We call the phase, i.e., producing a population containing such $x$ and $y$ from a population at the local optimum, the diversification phase.

Denote the generation that $x$ and $y$ first appear in the population by $t$. The probability that crossover happens in iteration $t$ is $0.9$. The probability that $x$ and $y$ are chosen in this iteration is $\frac{1}{\mu(\mu-1)}$. The probability that the result of applying crossover on $x$ and $y$ is the all-ones string is $(\frac{1}{2})^{2k}$, and the probability that no mutation happens on this intermediate child is $(1-\frac{1}{n})^n=\Theta(\frac{1}{e})$. Therefore, the probability that the all-ones string is found in iteration $t$ is $\Theta(\frac{9}{10e}\frac{1}{\mu(\mu-1)}(\frac{1}{4})^k)$.

If the algorithm fails to find the all-ones string in iteration $t$, we can repeat the argument from the beginning of the diversification phase since the population remains at the local optimum. The expected number of such trials is $O(\frac{10e}{9}\mu(\mu-1)4^k)$ and the expected length of one trial is $O((10e\frac{\mu+1}{\mu-1})^k\mu^{k-1}\frac{n^k}{(k-1)!})$ by Lemma~\ref{lem:mu_diversifies}. So after the entire population is at the local optimum, in expectation in another $O((40e\mu(\mu+1))^k(\frac{1}{\mu-1})^{k-1}\frac{10e}{9}\frac{n^k}{(k-1)!})$ iterations, the all-ones string will be found. Adding all together, the algorithm needs in expectation $O((40e\mu(\mu+1))^k(\frac{1}{\mu-1})^{k-1}\frac{10e}{9}\frac{n^k}{(k-1)!})+n\sqrt{k}(\mu \log \mu + \log n))$ iterations to optimize the \jumpnk function.
\end{proof}

The runtime proven in Theorem~\ref{thm:runtime_mu} is minimized when $\mu=\Theta(1)$. Take $\mu=2$ for example. There the speed-up compared with the expected $\Theta(n^k)$ runtime of EAs without crossover is $\Omega(\frac{(k-1)!}{(240e)^k})$, which is a real speed-up for $k=\omega(1)$. In \cite{DangFKKLOSS18}, crossover is only shown to lead to speed up when $\mu$ is large. Specifically, for $k=2$, the best speed-up is observed for $\mu=\Theta(\sqrt{n/\log n})$ and for $k\geq 3$, the best speed-up is observed for $\mu=\Theta(n)$. For $\mu=\Theta(1)$ however, no real speed-up can be inferred from \cite{DangFKKLOSS18}. Our arguments here demonstrate that crossover is helpful for the $(\mu+1)$~GA even for a small population.

\section{Experiments}
To complement our theoretical results, we also experimentally evaluate some runs of the NSGA\nobreakdash-II algorithm on the \ojzj benchmark, and the $(\mu + 1)$ GA on the \jump benchmark.
\subsection{The NSGA-II Optimizing \ojzj}
\subsubsection{Settings}
We implemented the algorithm as described in Section~\ref{sec:prelim} in Python, and experimented with the following settings.
\begin{itemize}
    \item Problem size $n$: $50$ and $100$.
    \item Jump size $k$: $2$.
    \item Population size $N$: $2(n-2k+3)$, and $4(n-2k+3)$. \cite{DoerrQ22ppsn} suggested that, even though their mathematical analysis applies only for $N\geq 4(n-2k+3)$, already for $N=2(n-2k+3)$ the algorithm still succeeds empirically. Therefore, we have also experimented with $N=2(n-2k+3)$ to confirm that our arguments still apply for the smaller population size.
    \item Selection method: fair selection.
    \item Variation operator: with $p_c$ chance uniform crossover is applied followed by bit-wise mutation for $p_c=0$ and $p_c=0.9$.
    \item Number of independent repetitions per setting: $10$.
\end{itemize}
We note that \cite{DoerrQ22ppsn} already presents some data points on the runtime (for $k=3$, $N/(2n-k+3) = 2, 4, 8$, and $n=20, 30, 40$). Therefore, we have chosen the settings to present new data points, and explore how efficient crossover is with different problem and jump sizes.
\subsubsection{Results}
\begin{table}[t]
\centering
\begin{tabular}{|r|r|r|}
\hline
& $p_c=0$ & $p_c=0.9$ \tabularnewline
\hline
$N=2(n-2k+3)$ & $247{,}617$ & $190{,}577$ \\
$N=4(n-2k+3)$ & $416{,}284$ & $147{,}921$ \\
\hline
\end{tabular}
\caption{Average runtime of the NSGA-II with bit-wise mutation on the \ojzj benchmark with $n=50$ and $k=2$.}\label{tab1}
\end{table}
\begin{table}[t]
\centering
\begin{tabular}{|r|r|r|}
\hline
& $p_c=0$ & $p_c=0.9$ \tabularnewline
\hline
$N=2(n-2k+3)$ & $2{,}411{,}383$ & $1{,}954{,}681$\\
$N=4(n-2k+3)$ & $3{,}858{,}084$ & $1{,}322{,}046$\\
\hline
\end{tabular}
\caption{Average runtime of the NSGA-II with bit-wise mutation on the \ojzj benchmark with $n=100$ and $k=2$.}\label{tab2}
\end{table}
Table \ref{tab1} contains the average number of fitness evaluations needed for the NSGA-II algorithm to cover the entire Pareto front of the \onejumpzerojump benchmark for $n=50$ and $k=2$, and Table~\ref{tab2} contains that for $n=100$ and $k=2$. For all of the settings, we have observed a standard deviation that is between $50\%$ to $80\%$ of the mean, and we believe that this is because the runtime is dominated by the waiting time needed to find the extremal points on the front, which roughly speaking could be the maximum of two geometric random variables (one for each extremal point). 
The experimental results support our arguments that crossover significantly speeds-up the \NSGA. 

\cite{DoerrQ22ppsn} has reported runtimes of the NSGA-II algorithm optimizing \onejumpzerojump for $n=20, 30, 40$, for $k=3$, and $N/(n-2k+3)=2, 4, 8$. Heree the version with crossover was faster than the one without by a factor of between $3.9$ and $9$. We note that, for the larger problem sizes $n=50, 100$ and the smaller jump size $k=2$, the effect of crossover is less profound (namely only by a factor of between $1.2$ and $2.9$). This supports the impression given by our mathematical analysis that the speed-up from crossover is more significant when $k$ is large. 

It is also worth noting that the benefits of crossover is much more pronounced when $N=4(n-2k+3)$ than when $N=2(n-2k+3)$. We suspect that this is because when the population becomes larger, there will be more individuals close to the outermost points of $F_I^*$. As a result, it is easier for the algorithm to reach and maintain diversity at the those points, which as our analysis has suggested, is what makes crossover beneficial.

To support our reasoning on how efficient crossover could be for different population sizes, we have also recorded for $n=50$ the diversity among the individuals with $k$ and $n-k$ bits of $1$ throughout the runs of the experiments. Specifically, for every $n^k/50$ iterations, we look at the individuals in the parent population with $k$ and $n-k$ bits of $1$. Among all individuals with $k$ bits of $1$, we calculate the Hamming distances between any two individuals, and record the maximum distance divided by $2$. This number gives us an idea of how much a lucky crossover can decrease the waiting time of finding $0^n$, since as discussed in our analysis, pairing up two individuals with $k$ bits of $1$ whose hamming distance is $2d$ means there is an $\Omega(\frac{1}{2^{2d}n^{k-d}})$ chance of creating the all-zeroes string. We do the same for the individuals with $n-k$ bits of $1$. Note that the greatest this number can be is $k$. Since as shown in~\cite{DoerrQ22ppsn}, the runtime is dominated by the waiting time needed to find the all-ones and the all-zeroes strings after the inner part of the Pareto front has been covered, we are mostly interested in how the diversity develops in that phase. To this end, we discard data points recorded when the inner part of the Pareto front has not been fully covered, and those recorded after one of the extremal points has already been discovered. For one run of the experiment, we average the data points recorded for $k$ and $n-k$ bits of $1$ together in the end. Finally we average the mean values obtained from the $10$ repetitions. For $N=2(n-2k+3)$, we have observed that the diversity measure is $0.76$\textsubscript{$\pm 0.62$}, while for $N=4(n-2k+3)$, it is $0.99$\textsubscript{$\pm 0.45$}. This means indeed there is more diversity on the boundaries with a larger population, explaining why with $N=4(n-2k+3)$ the speed-up from crossover is more than that with $N=2(n-2k+3)$. We note that this does not mean that large population sizes are preferable since the higher cost of one iteration has to be taken into account. Due to the high runtimes, we do not have yet data allowing us to answer this question conclusively, but preliminary experiments for $n=50$ suggest that a population size of $N=8(n-2k+3)$ already gives inferior results.

We note that the speed-up from crossover empirically is more profound than what we have shown theoretically. Though it is normal that mathematical runtime guarantees cannot capture the full strength of an algorithm, as a first attempt to understand where the discrepancy comes from we also recorded for each run how the all-ones string is generated (for the case $n=100$ and $N=2(n-2k+3)$). As a result, among the $10$ runs that we have done, $9$ times crossover has participated in the generation of $1^n$. Out of these $9$ times, there are $5$ times where crossover by itself has created $1^n$. However, for all of those $5$ times, only one of the parents has $n-k=98$ 1-bits, while the other parent has between $95$ to $97$ 1-bits. Among all the runs, we have observed only once that both of the parents have $98$ bits of $1$. This suggests that crossover also profits from diversity between individuals of different objective values, a fact not exploited in our runtime analysis.

\subsection{The ($\mu+1$) GA Optimizing \jump}
\begin{figure}[t]
\centering
\includegraphics[width=0.9\columnwidth]{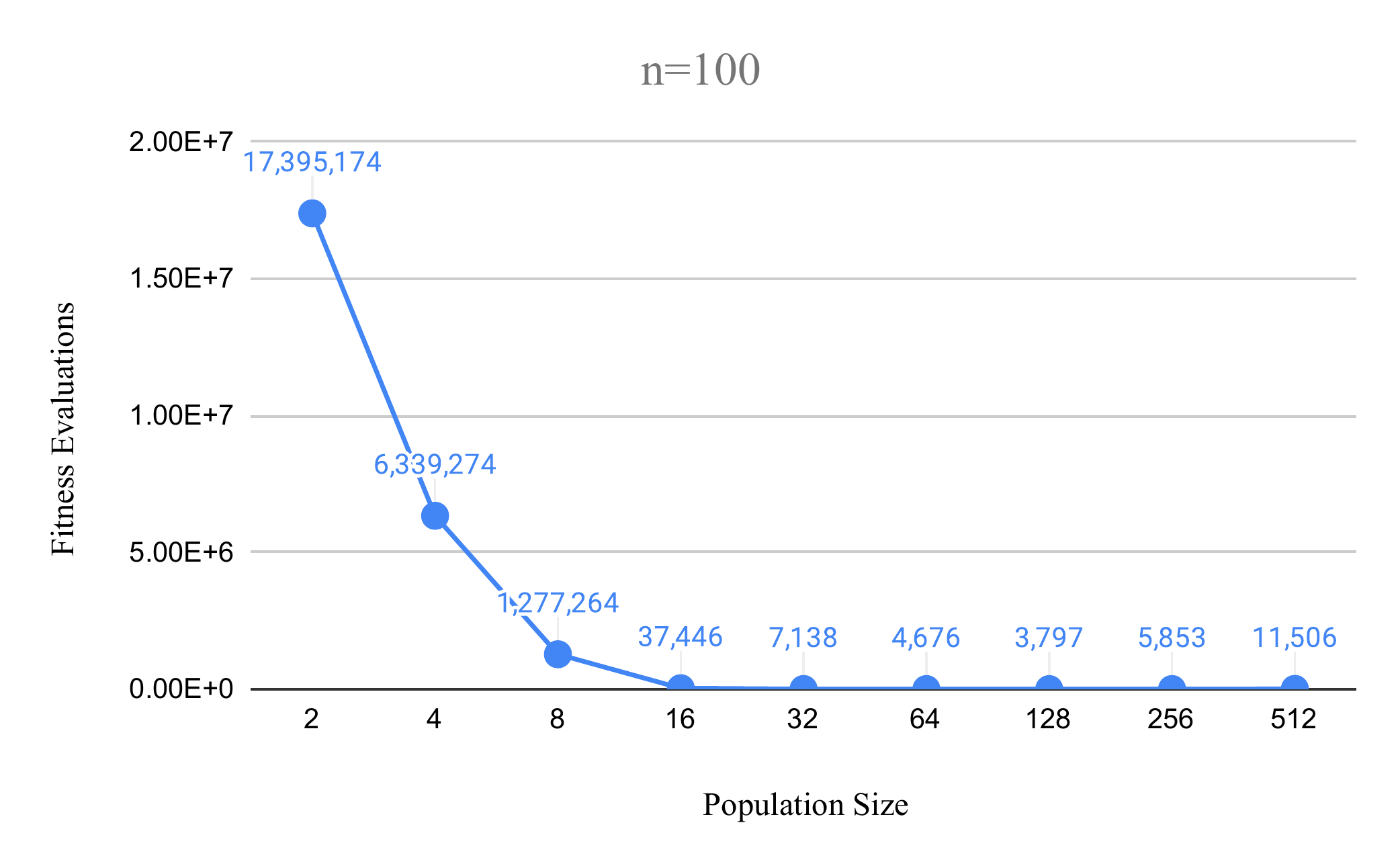}
\caption{Average number of fitness evaluations needed for the ($\mu+1$) GA to optimize \jumpnk for $n=100$ and $k=4$ using uniform crossover with probability $0.9$ and bit-wise mutation.}
\label{fig1}
\end{figure}
\begin{figure}[t]
\centering
\includegraphics[width=0.9\columnwidth]{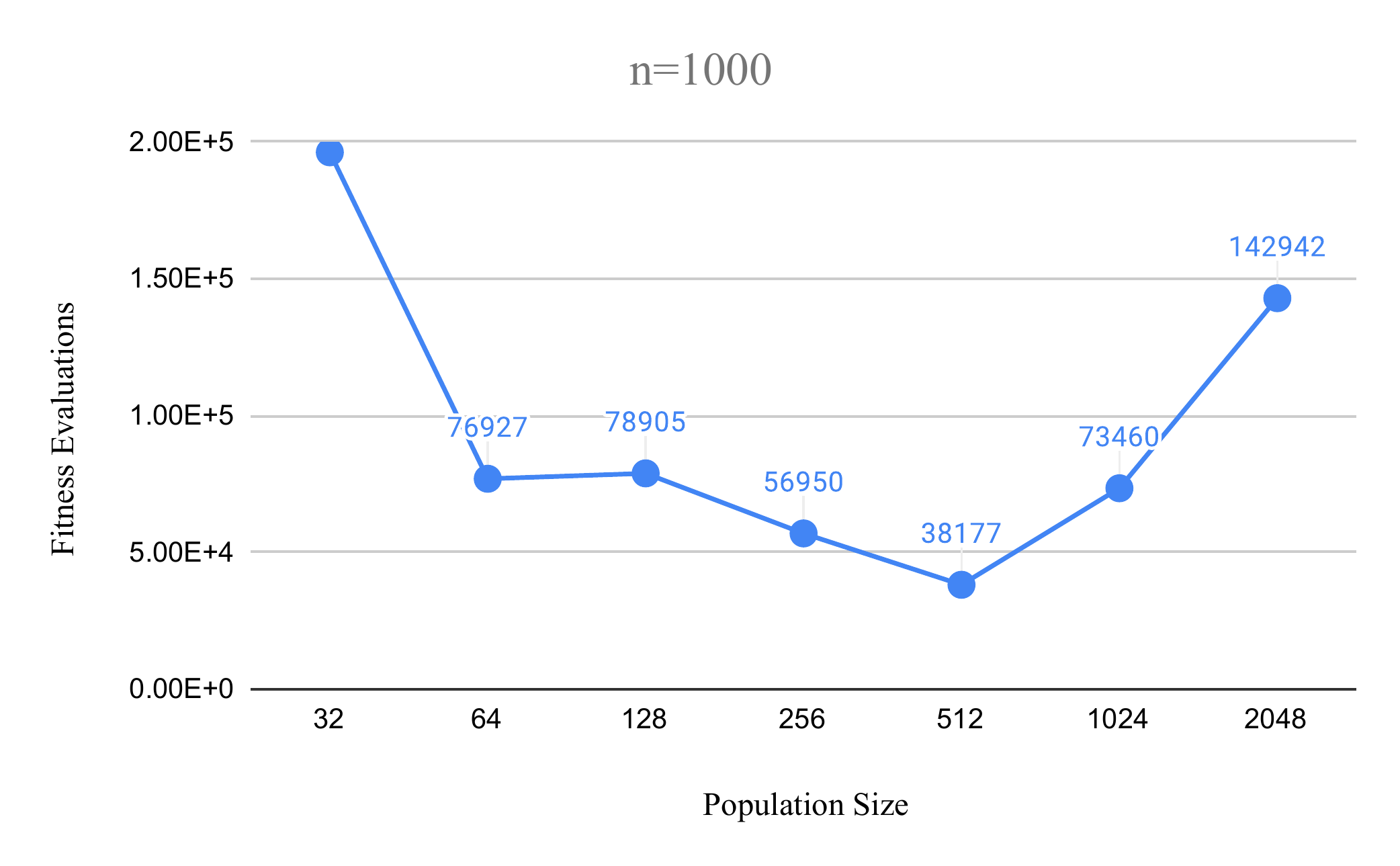}
\caption{Average number of fitness evaluations needed for the ($\mu+1$) GA to optimize \jumpnk for $n=1000$ and $k=4$ using uniform crossover with probability $0.9$ and bit-wise mutation.}
\label{fig2}
\end{figure}
\subsubsection{Settings}
We implemented the algorithm as described in Section~\ref{sec:prelim} in Python, and experimented with the following settings.
\begin{itemize}
    \item Problem size $n$: $100$ and $1000$.
    \item Jump size $k$: $4$.
    \item Population size $\mu$: $2^i$ for $i=[0 .. 9]$ for $n=100$, and $2^i$ for $i=[5 .. 11]$ for $n=1000$. These population sizes are chosen for us to explore the best population size for each problem size.
    \item Variation operator: with $0.9$ chance uniform crossover is applied followed by bit-wise mutation, and otherwise only bit-wise mutation is performed.
    \item Number of independent repetitions per setting: $10$.
\end{itemize}
\subsubsection{Runtime}
Figure \ref{fig1} contains the average number of fitness evaluations needed for the ($\mu+1$) GA to optimize \jumpnk for $n=100$ and Figure \ref{fig2} contains that for $n=1000$. 

The results confirm that crossover is beneficial for the ($\mu+1$) GA optimizing \jump as predicted from the runtime analyses in~\cite{DangFKKLOSS18} and this work. The experimental results suggest that the best speed-up is observed for large populations ($128$ for $n=100$ and $512$ for $n=1000$), different from our analysis. However, already smaller populations give significant speed-ups. For example, when $n=100$ and $k=4$, merely with $\mu=2$, crossover is able to decrease the runtime from over $((1/n)^{k} (1-1/n)^{n-k})^{-1} \ge 10^{8}$, the expected waiting time for a successful mutation, to $1.7\times 10^7$. With $\mu=4$, the runtime is further decreased to $6.3\times 10^6$. These results suggest that the effects exploited in our analysis contribute measurably significantly to the advantages of crossover, in particular, when $\mu=\Theta(1)$ where the analysis in~\cite{DangFKKLOSS18} could not show a speed-up.

\section{Conclusions and Future Works}

In this work, we conducted a mathematical runtime analysis of the \NSGA that shows a speed-up from crossover (on the \ojzj benchmark), already for small population sizes (larger than four times the Pareto front, a number required already for previous runtime analyses of a mutation-only version of the \NSGA). Interestingly, the proven gain from crossover increases with the difficulty parameter $k$ of the \ojzj benchmark. 
Our results are very different from previous runtime analyses of crossover-based algorithms. With no runtime analyses of MOEAs on the \ojzj benchmark or comparable problems existing, the work closest to ours might be the runtime analysis of the \mpoga on the single-objective \jump benchmark in~\cite{DangFKKLOSS18}. There the best speed-ups were obtained from population sizes of order $\sqrt n$, and the speed-ups were at most a factor of $\Theta(n)$, regardless of how large the difficulty parameter $k$ was chosen.
Hence our work has detected a novel way how crossover-based algorithms can leave local optima. As a side result, we show that our arguments can  be employed for the \mpoga, showing significant speed-ups there as well, again from small (constant) population sizes on and increasing significantly with the difficulty parameter~$k$.

Our experimental results, similar to the ones in~\cite{DangFKKLOSS18}, confirm the proven advantages of crossover, but also show that crossover is much more powerful than what the mathematical proofs could distill. For the \NSGA, for example, we observe considerable speed-ups already for the smallest possible value $k=2$ of the difficulty parameter. Trying to explain these via proven results appears as an interesting problem for future research. 

As a second direction for further research, we note that we did not prove any lower bounds, so we have no estimate on how far our runtime guarantees are from the truth. So far lower bounds have only been proven for the mutation-based \NSGA~\cite{DoerrQ23LB}. Understanding the population dynamics of a crossover-based algorithm well enough to admit reasonable lower bounds surely needs significantly stronger arguments since now it is not sufficient to understand how many individuals with a certain objective value exist in the population, but also their genotype is important.

\section{Acknowledgments}

This work was supported by a public grant as part of the
Investissements d'avenir project, reference ANR-11-LABX-0056-LMH,
LabEx LMH.

\bibliographystyle{alphaurl}
\bibliography{ich_master,alles_ea_master}

}
\end{document}